\documentclass{article} 
\usepackage{iclr2027_conference,times}

\usepackage{amsmath,amssymb,amsthm,booktabs,multirow}
\usepackage{algorithm}
\usepackage{algorithmic}
\usepackage{graphicx}
\usepackage{float}
\usepackage{wrapfig}
\usepackage{hyperref}
\usepackage{url}

\newtheorem{proposition}{Proposition}
\newtheorem{definition}{Definition}

\newcommand{\reservoir}{\mathcal{R}}

\title{Beyond Class Marginals:\\
Bounding Rehearsal Gaps without Freezing Class Co-occurrence}

\author{Congren Dai$^{1}$ \qquad Nat Roongjirarat$^{2}$ \qquad Fei Ye$^{3}$\thanks{Corresponding author.}\\
$^{1}$Imperial College London \quad $^{2}$King's College London\\
$^{3}$University of Electronic Science and Technology of China}

\iclrfinalcopy
\begin{document}

\setcounter{footnote}{1}
\maketitle
\lhead{ }

\begin{abstract}
Class-balanced replay controls class frequency but does not determine the
interval between successive replay appearances of a class. We study this
interval, the \emph{rehearsal gap}, separately from the class marginal and
class co-occurrence, and introduce \emph{randomised-pass replay} (RPR),
which visits each resident class once per shuffled pass. For a fixed set of
$C$ resident classes and replay batch size $b\le C$, RPR preserves the balanced
time-averaged class marginal and bounds every
gap by $2\lceil C/b\rceil-1$; a churn-conditional bound applies while the
resident set changes. The scheduler uses no future class information and adds
no replay examples or forward passes. In a linear-head ER-ACE diagnostic,
joint absence from the incoming and replay batches produces a one-sided
classifier-bias gradient. Longer absence episodes are associated with larger
negative bias displacement, and removing the incoming-loss mask attenuates
the scheduling effect. In the primary ER-ACE experiments, RPR improves final
average accuracy by $0.72$--$1.67$ percentage points relative to independent
class-balanced retrieval under reservoir storage, with positive effects also
observed under balanced storage.
Pretrained ViTs show positive effects on the tested LT10 streams with small
replay batches, while matched larger-batch controls show no material effect.
Fixed-cycle and reused-pass controls change more than one temporal statistic,
so the experiments do not isolate rehearsal-gap length from all other forms
of temporal dependence. The accuracy effects depend on the learner and
operating regime.
\end{abstract}

\section{Introduction}

Online continual learning requires a model to acquire new knowledge while
retaining classes that become infrequent or disappear from the incoming
stream. Experience replay addresses this by revisiting a small memory of
past examples \citep{chaudhry2019tinyepisodicmemoriescontinual}. Under an
imbalanced stream, storage alone does not determine when retained examples
are replayed. Storage-side balancing
\citep{chrysakis2020cbrs,buzzega2020bagoftricks} controls which classes
occupy memory, and class-balanced retrieval controls their expected share
of a replay batch. Neither specifies \emph{when} a stored class returns.

Replay order determines the sequence of gradient updates even when aggregate
class exposure is unchanged. Equal average exposure can coexist with long
periods of absence, during which other classes continue to update the model.
We call the interval between successive replay appearances of a class its
\emph{rehearsal gap}. Table~\ref{tab:gaps} gives a fixed-memory example. On
a fixed balanced memory with $200$ classes and eight replay examples per
step, independent balanced retrieval and a shuffled-pass schedule have mean
gaps of $24.82$ and $25.00$ steps, and maximum gaps over $3{,}000$ steps of
$263$ and $49$, respectively.

We distinguish three properties of a replay sequence: the \emph{class
marginal} is how often a class appears; the \emph{rehearsal gap} is how long
it waits between appearances; and \emph{class co-occurrence} is which
classes share a batch (Figure~\ref{fig:rpr}a). On a fixed resident set larger
than the replay batch, independent class-balanced retrieval gives a geometric
rehearsal-gap distribution with unbounded support. Fixed class cycling
\citep{hickok2024watch} gives bounded gaps and
repeated pairings determined by the cycle. At a fixed replay budget, we
study schedules that bound resident-class rehearsal gaps while preserving the
fixed-set time-averaged marginal and resampling class co-occurrence across
passes, and evaluate their effect on learning.

\begin{figure}[t]
  \centering
  \includegraphics[width=\textwidth]{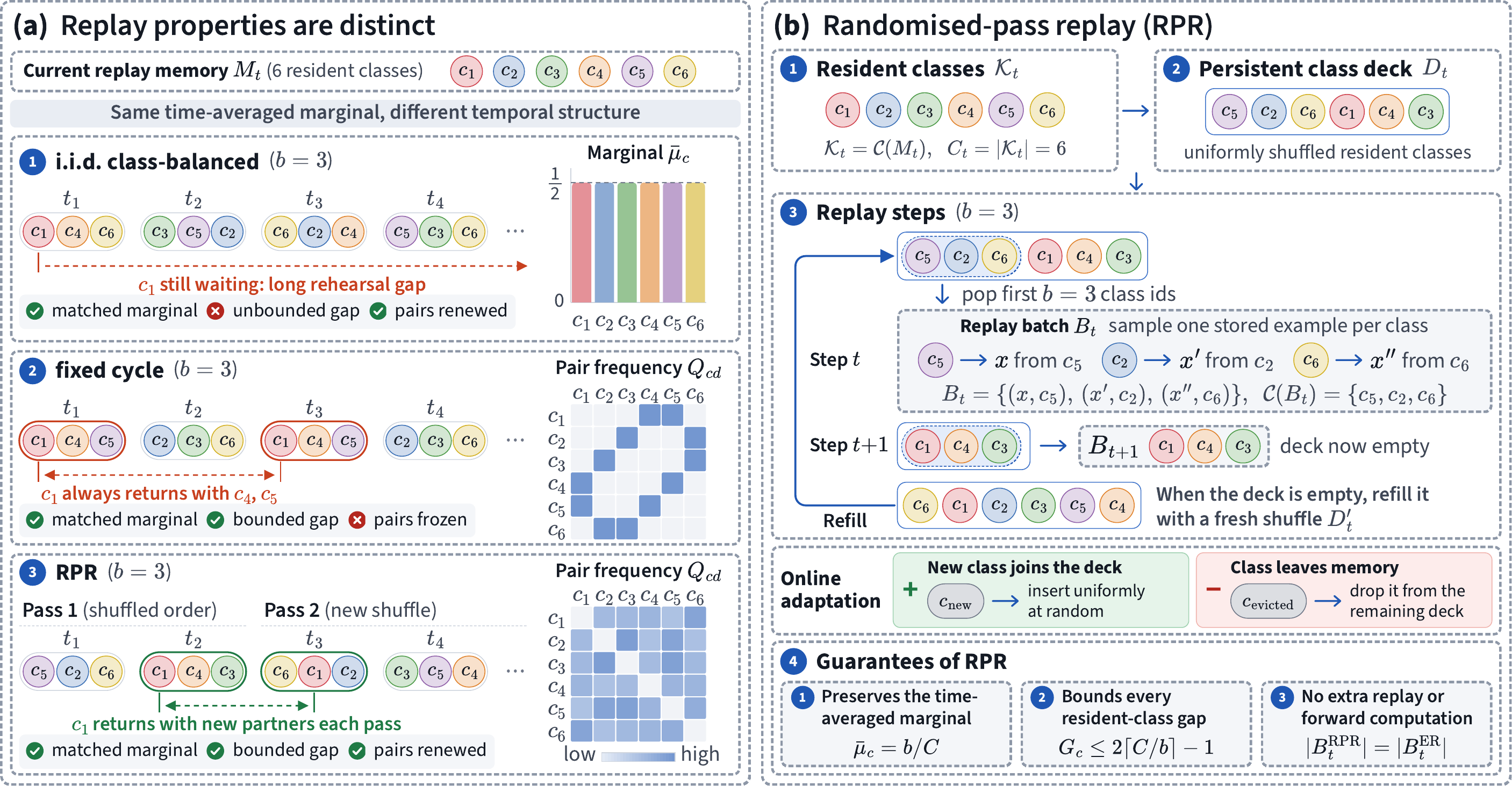}
  \caption{\textbf{Replay properties and randomised-pass replay (RPR).}
  (a) On a fixed resident set with replay batch $b\le C$, independent balanced
  retrieval, fixed cycling, and RPR have the same time-averaged class marginal
  $b/C$ but may differ in gap tails and class co-occurrence.
  (b) RPR keeps a persistent shuffled deck of resident classes, samples one
  stored example per selected class, and reshuffles when the deck empties.
  On a fixed resident set, it preserves $\bar{\mu}_c=b/C$ and bounds
  $G_c\le2\lceil C/b\rceil-1$. It adds no replay examples or forward passes.}
  \label{fig:rpr}
  \vspace{-20pt}
\end{figure}

Randomised-pass replay (RPR) maintains a persistent shuffled deck of
resident classes. Each class is visited once per pass, after which the deck
is reshuffled (Figure~\ref{fig:rpr}b). The deck is updated as classes enter
or leave memory. The scheduler requires no future class identities, total
class count, or task boundaries. On a fixed set of $C$ resident
classes with replay batch $b\le C$, its time-averaged marginal is balanced
and every gap is bounded by $2\lceil C/b\rceil-1$. A churn-conditional bound
covers changing memory; exact marginal equality applies only in the
fixed-set limit.

The scheduling guarantees do not imply an accuracy improvement. We analyse a
linear-head ER-ACE diagnostic \citep{caccia2022newinsights} to relate replay
gaps to parameter updates. The incoming-loss mask removes a seen but absent
class from incoming competition, while replay examples of other classes
continue to contribute a positive gradient to its classifier bias. During
joint absence, the class-specific bias receives gradients of the same sign
at every step while the class remains resident. We evaluate this mechanism
using measured bias displacement and mask interventions. The primary accuracy study uses a
bias-free cosine head. The bias-drift analysis therefore applies directly to
the linear-head diagnostic and does not by itself explain the primary
cosine-head results. We compare fixed cycling, reused permutations, storage
policies, and replay budgets to separate scheduling effects from storage
balance. Pretrained-backbone controls evaluate the same retrieval
intervention outside the primary ER-ACE configuration.

The shuffled-pass primitive is standard; our contributions concern its use
in replay.
\textbf{1) Replay-schedule decomposition.} We distinguish class marginal
frequency, rehearsal gaps, and class co-occurrence as separate properties of
a replay sequence.
\textbf{2) Scheduling guarantees.} For RPR, we derive the fixed-set
time-averaged marginal, a worst-case resident-class gap bound, and a
churn-conditional bound. We also give a lower bound on the maximum gap of any
recurrent schedule and, under the stated conditions, the fixed-partition
structure induced by attaining it exactly.
\textbf{3) Mechanistic and empirical evaluation.} We measure gap-conditioned
bias displacement in ER-ACE and evaluate retrieval order across storage
policies, replay losses, replay budgets, and pretrained backbones.
The temporal controls change more than one schedule statistic, so the
experiments do not isolate rehearsal-gap length from all other forms of
temporal dependence.

\section{Related Work}

\paragraph{Storage and retrieval.}
Reservoir sampling \citep{randomsamplingwithareservoir} retains a
stream-uniform sample and is standard in online continual learning
\citep{chaudhry2019tinyepisodicmemoriescontinual}. CBRS
\citep{chrysakis2020cbrs} and Balanced Reservoir Sampling (BRS)
\citep{buzzega2020bagoftricks} instead favour balanced occupancy; GSS
diversifies stored gradients \citep{aljundi2019gradient}, and InfoRS gates
reservoir eligibility by information \citep{sun2022infors}. These methods
modify memory contents rather than the temporal order of class retrieval.
Among retrieval methods, MIR selects examples whose loss increases after a
virtual incoming update \citep{aljundi2019mir}, and ASER uses a
Shapley-value proxy \citep{shim2021online}. Both optimise per-example
criteria rather than the class-level rehearsal-gap statistics studied here.
CeCR samples classes without replacement within a joint retrieval, storage,
and loss design \citep{sun2024cecr}. We compare schedules at fixed storage,
loss, and replay budget; exact time-averaged marginal matching applies
to the fixed-resident-set analysis.

\paragraph{Cyclic scheduling and reshuffling.}
The Balanced primitive of \citet{hickok2024watch} continues through classes
in fixed class-id order; their separate deduplication schedules limit
repeated examples. We include this class-cycle primitive as a baseline. It
gives bounded class gaps and, on a fixed resident set, a repeated class
partition when $b\mid C$.
Task-level scheduling instead learns which
previous tasks to replay \citep{klasson2023learn}, whereas RPR orders
resident classes without task boundaries. Random reshuffling is well
established for finite-sum optimisation
\citep{gurbuzbalaban2021random,haochen2019random,mishchenko2022proximal}.
RPR applies it to a changing set of resident \emph{class identifiers}
instead of a static set of objective components. Our analysis concerns replay
gaps, class co-occurrence, and their interaction with learning; it does not
use finite-sum convergence results.

\paragraph{Replay losses.}
ER-ACE masks seen-but-absent classes from the incoming loss
\citep{caccia2022newinsights}. DER regresses stored logits, and DER++ adds
supervised replay cross-entropy \citep{buzzega2020dark}. Contrastive and
prototype-based methods couple replay examples through batch-level
objectives \citep{mai2021scr,guo2022ocm,wei2023online}.
Definition~\ref{def:blind} characterises replay terms in which each example's
gradient contribution is independent of the other examples in the batch;
Section~\ref{sec:lawresults} relates this property to full training trajectories.

\section{Setting and Notation}
\label{sec:setting}

A task-free online learner processes a stream
$S=((x_i,y_i))_{i=1}^{N}$ once, without task-boundary signals. At step $t$,
it receives an incoming minibatch $A_t$, retrieves $B_t$ from a memory
$M_t$ of at most $m$ previously observed examples, and takes one gradient step on
\begin{equation}
  \mathcal{L}(\theta_t)
  =\ell^{\mathrm{on}}(\theta_t;A_t)+\ell^{\mathrm{re}}(\theta_t;B_t).
  \label{eq:loss}
\end{equation}
The requested replay budget is $n_t=\min(b,|M_t|)$ and the realised size obeys
$|B_t|\le n_t$ (Appendix~\ref{app:algorithm}); an empty memory contributes no
replay loss. Write $\mathcal{C}(B)=\{y:(x,y)\in B\}$ and
$\mathcal{S}_t=\bigcup_{s\le t}\mathcal{C}(A_s)$ for the classes in a batch
and those seen so far. The scheduler uses only the current resident set
\begin{equation}
  \mathcal{K}_t=\mathcal{C}(M_t),\qquad C_t=|\mathcal{K}_t|.
  \label{eq:residentset}
\end{equation}
It requires neither the identities nor the number of future classes: a class
enters $\mathcal{K}_t$ only after observation and storage. We use $C$ for a
fixed resident count in the analysis, not an advance input to RPR.
Storage determines which examples remain in $M_t$; retrieval determines
which return in $B_t$. Section~\ref{sec:seams} varies these rules independently.

\section{Randomised Pass Replay}
\label{sec:method}
\vspace{-3pt}
\subsection{Replay Marginals, Gaps, and Co-occurrence}
\vspace{-3pt}
For a retrieval policy $\pi$, distinguish the history-conditional inclusion
probability $\mu_c^{(t)}=\Pr[c\in\mathcal{C}(B_t)\mid\mathcal{F}_t]$, conditional
on the history available before retrieval at step $t$, from the time-averaged marginal
\begin{equation}
  \bar\mu_c(\pi)=\lim_{N\to\infty}\frac{1}{N}
  \sum_{t=1}^{N}\Pr[c\in\mathcal{C}(B_t)],
  \label{eq:timeavg}
\end{equation}
when this limit exists on a fixed resident set. Finite online runs use
empirical visit frequencies. I.i.d.\ class-balanced retrieval independently
selects $b$ distinct classes per step, giving both marginals $b/C$ for
$b\le C$. RPR does not preserve the history-conditional inclusion
probability, since classes already visited in the current pass are
temporarily unavailable. We use \emph{matched marginal} only for the
fixed-set time average in Equation~\eqref{eq:timeavg}. Under class arrivals,
evictions, and incomplete passes, finite-horizon empirical frequencies may
differ across retrieval policies. The online comparisons match storage,
loss, and replay size, but do not establish exact per-class frequency
equality under churn.

The empirical frequency over $T$ steps is
$\hat\mu_c=T^{-1}\sum_{t=1}^{T}\mathbf{1}[c\in\mathcal{C}(B_t)]$.
Equal replay size fixes the total example budget but does not fix each
$\hat\mu_c$, and a similar aggregate mean gap does not establish per-class
equality.

Let $\tau_c(k)$ denote the $k$-th replay appearance of class $c$. Its
\emph{rehearsal gap} and the empirical frequency of a distinct class pair are
\begin{equation}
  G_c(k)=\tau_c(k+1)-\tau_c(k),\qquad
  Q_{cd}^{(N)}=\frac{1}{N}\sum_{t=1}^{N}
  \mathbf{1}[\{c,d\}\subseteq\mathcal{C}(B_t)].
  \label{eq:gap}
\end{equation}
Neither statistic is determined by $\bar\mu_c$. For i.i.d.\ balanced
retrieval on a fixed set with $b<C$,
\begin{equation}
  G_c\sim\mathrm{Geometric}(p),\qquad
  \mathbb{E}[G_c]=C/b,\qquad
  \Pr[G_c>g]=(1-p)^g,\qquad p=b/C.
  \label{eq:geometric}
\end{equation}
For i.i.d.\ balanced retrieval, $\mathbb{E}[G_c]=C/b$, with unbounded gap
support when $b<C$. Fixed class cycling gives bounded rehearsal gaps and,
when $b\mid C$, repeated class pairings within a fixed partition
(Figure~\ref{fig:rpr}a).

\subsection{Randomised Pass Scheduler}
\label{sec:rpr}

RPR maintains a persistent deck of resident classes not yet visited in the
current pass. When $b\le C_t$, the scheduler removes $b$ class identifiers from
the deck and samples one stored example from each selected class. When the
deck empties, it is refilled with a uniform random permutation of
$\mathcal{K}_t$. If a pass boundary falls within a minibatch, classes
already visited in that step move to the end of the new permutation; this
prevents duplicates when $b\le C_t$. A newly resident class enters a
uniformly random position among unvisited entries, and an evicted class
leaves the deck. Insertions and evictions update the current deck without
restarting the pass. The scheduler requires no information about future
classes. If $b\ge C_t$, retrieval covers every resident class. Appendix~\ref{app:algorithm} specifies the
complete rule, including warm-up and within-class sampling.

\begin{proposition}[Preserved time-averaged marginal]
\label{prop:marginal}
On a fixed resident set of size $C$ with $b\le C$, RPR visits each class
once per pass. Hence $\bar\mu_c=b/C$ and the event-average gap is
$\mathbb{E}[G_c]=C/b$, as under i.i.d.\ balanced retrieval. The
history-conditional per-step marginals need not coincide.
\end{proposition}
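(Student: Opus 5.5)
The plan is a counting argument: the deck emits class identifiers as a single concatenated stream, passes tile that stream, and each step consumes a window of it. On a fixed resident set $\mathcal{K}$ with $|\mathcal{K}|=C$ and $b\le C$, there are no insertions or evictions. I would therefore model the scheduler's output as one infinite sequence of class identifiers $\sigma_1\sigma_2\sigma_3\cdots$. This sequence is formed by concatenating successive permutations $P_1,P_2,\dots$ of $\mathcal{K}$, each of length $C$, and step $t$ consumes positions $(t-1)b+1,\dots,tb$.

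The boundary rule, which moves already-visited classes to the end of the new permutation, reorders entries only within $P_{j+1}$. It does not change the fact that each pass block contains every class exactly once. It also guarantees that the $b$ identifiers in any single step are distinct, so the number of steps at which $c$ appears equals the number of occurrences of $c$ in the consumed prefix.

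\textbf{Marginal.} After $N$ steps, $Nb$ identifiers have been consumed, which is $\lfloor Nb/C\rfloor$ complete passes plus a partial one. Each complete pass contributes exactly one appearance of $c$, and the partial pass contributes at most one. Hence the count $K_c(N)=\sum_{t\le N}\mathbf{1}[c\in\mathcal{C}(B_t)]$ satisfies $\lfloor Nb/C\rfloor\le K_c(N)\le\lfloor Nb/C\rfloor+1$ deterministically, on every sample path. Taking expectations, dividing by $N$ and letting $N\to\infty$ shows that the limit in Equation~\eqref{eq:timeavg} exists and equals $b/C$. Since the bound is pathwise, the empirical frequency $\hat\mu_c$ also converges to $b/C$ almost surely.

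\textbf{Mean gap.} The event-average gap over the first $K$ appearances telescopes:
\[
\frac{1}{K-1}\sum_{k<K}G_c(k)=\frac{\tau_c(K)-\tau_c(1)}{K-1}.
\]
From the counting bound, $\tau_c(K)=KC/b+O(C/b)$, so this average tends to $C/b$, which matches Equation~\eqref{eq:geometric} for i.i.d.\ retrieval. The statement should be read as this event average, a renewal-reward-type limit, since the gaps within one run are neither identically distributed nor independent.

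\textbf{Main obstacle.} The likely sticking point is making $\mathbb{E}[G_c]$ precise, because the gap distribution depends on position within passes and on the pass-boundary adjustments. I would avoid computing per-gap distributions and define the mean as the pathwise event average above, which the telescoping argument settles cleanly.

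\textbf{Non-coincidence of history-conditional marginals.} A single counterexample suffices. Take a step at which $c$ has already been drawn in the current pass and the pass does not end within that step. Then $\mu_c^{(t)}=0\neq b/C$. Conversely, near the end of a pass an unvisited $c$ is included with probability $1$.
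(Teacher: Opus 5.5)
Your proposal is correct and takes the same route as the paper, which ends the proof of Proposition~\ref{prop:gap} by observing that every pass remains a permutation and that no step repeats a class when $b\le C$, so each complete pass gives exactly one visit per class over $C$ slots. Your pathwise bound $\lfloor Nb/C\rfloor\le K_c(N)\le\lfloor Nb/C\rfloor+1$, the telescoping argument for the event-average gap, and the explicit $\mu_c^{(t)}=0$ witness spell out in full the counting step the paper states in one line.
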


\begin{proposition}[Bounded gap on a fixed resident set]
\label{prop:gap}
Under the same conditions, every class and visit satisfy
\begin{equation}
  G_c(k)\le 2\Big\lceil \frac{C}{b}\Big\rceil-1.
  \label{eq:bound}
\end{equation}
The bound is tight when $b$ divides $C$: a class can appear in the first
batch of one pass and the last batch of the next.
\end{proposition}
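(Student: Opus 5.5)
The plan is to view the RPR replay sequence on a fixed resident set as a concatenation of passes, and to bound the distance between two consecutive appearances of a class by tracking where it sits in adjacent passes. Let $L=\lceil C/b\rceil$. On a fixed set the deck is refilled with a permutation of all $C$ classes, and $b$ identifiers are removed per step. If $b\mid C$, each pass occupies exactly $L$ consecutive steps. If $b\nmid C$, the last step of a pass is completed with the first identifiers of the next permutation. The carry-to-end rule puts the classes already visited in that step at the end of the new permutation, so the new pass again covers all $C$ classes, each exactly once (Proposition~\ref{prop:marginal}). I will therefore index passes by $j$ and let step $s_j$ be the step at which pass $j$ begins. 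Pass $j$ then occupies steps $s_j,\dots,e_j$, where $e_j-s_j\le L-1$: $C$ identifiers at $b$ per step, possibly sharing a boundary step with a neighbour, span at most $\lceil C/b\rceil$ steps. When the boundary step is shared, it is also true that $e_j=s_{j+1}$.

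\textbf{Main step.} Consider consecutive appearances $\tau_c(k)$ and $\tau_c(k+1)$. Since $c$ appears exactly once per pass, these lie in consecutive passes $j$ and $j+1$; the sole exception is a shared boundary step, and the carry rule excludes it, because a class visited in the shared step is not repeated there. Hence $\tau_c(k)\ge s_j$ and $\tau_c(k+1)\le e_{j+1}$. This gives
\[
G_c(k)\le e_{j+1}-s_j\le (e_{j+1}-s_{j+1})+(s_{j+1}-s_j)\le (L-1)+L=2L-1,
\]
using $s_{j+1}-s_j\le L$, since pass $j$ finishes within $L$ steps of its start. This is the bound in~\eqref{eq:bound}.

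\textbf{Main obstacle.} The delicate point is the boundary bookkeeping when $b\nmid C$. I must check that the carry rule never pushes a class beyond $e_{j+1}$. The carried classes are moved to the end of the new permutation, but they still belong to pass $j+1$, whose span is bounded by $L$ steps from $s_{j+1}$. Moreover, $s_{j+1}$ is at most the last step of pass $j$, so $s_{j+1}-s_j\le L-1$ in the shared case. I would write out this case explicitly to confirm that the sum stays at most $2L-1$, and in fact becomes $2L-2$ whenever a boundary step is shared.

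\textbf{Tightness.} Take $b\mid C$, so that passes are disjoint blocks of exactly $L=C/b$ steps. Choose the permutations so that $c$ lies in the first batch of pass $j$, at step $s_j$, and in the last batch of pass $j+1$, at step $s_j+2L-1$. Each permutation is uniformly random, so this configuration has positive probability and the bound is attained.
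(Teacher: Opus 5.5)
Your argument is correct when $b\mid C$, since passes are then disjoint blocks of exactly $L$ steps, and your tightness example is fine. The case $b\nmid C$, however, rests on a false span claim. A pass can share a boundary step with \emph{both} neighbours, and it then spans $L+1$ steps. For example, with $C=5$, $b=3$, the second pass occupies slots $6$--$10$, i.e.\ steps $2,3,4$, so $e_j-s_j=L$.

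This is not just slack in your estimate. With $C=7$, $b=4$ ($L=2$), the second pass occupies steps $2$--$4$ and the third occupies steps $4$--$6$, so $e_{j+1}-s_j=4=2L$. The envelope $\tau_c(k)\ge s_j$, $\tau_c(k+1)\le e_{j+1}$ therefore cannot yield $2L-1$, however you regroup the terms. The same example refutes your side claims. The boundary is shared, yet $s_{j+1}-s_j=2=L$. A class at slot $8$ (step $2$) can return at slot $19$ (step $5$), a gap of $2L-1$, not $2L-2$.

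Your ``main obstacle'' paragraph checks the wrong thing. The issue is not whether carried classes stay within $e_{j+1}$. What you need is that the carry rule keeps the \emph{early} classes of pass $j$ out of the last step of pass $j+1$.

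The missing step is a case split that uses the carry rule quantitatively, as in the paper's proof. Let pass $j$ begin after $a$ consumed slots, and write $a+C=hb+r$ with $0\le r<b$. The last $r$ classes of pass $j$ share step $h+1$ with pass $j+1$ and are moved to its end.

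If $c$ is not among these $r$ classes, it lies in the first $C-r$ positions of pass $j+1$. Its return is then at slot at most $hb+C$, i.e.\ in step at most $h+L$. Its previous visit is in step at least $\lfloor a/b\rfloor+1$. Since $h-\lfloor a/b\rfloor\le L$, this gives $G_c(k)\le 2L-1$.

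If $c$ is among them, its previous visit is exactly step $h+1$, not merely some step $\ge s_j$. It returns by step $h+\lceil (C+r)/b\rceil$, so $G_c(k)\le\lceil (C+r)/b\rceil-1\le L$.

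In the $C=7$, $b=4$ example, slot $21$ (step $6$) is occupied by a class carried from step $4$. That is precisely why no class visited in step $2$ can reach it, and your pass-envelope argument does not see this.
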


\begin{proof}
Let $q=\lceil C/b\rceil$, and let a pass start after $a$ consumed class slots.
Its last slot is $a+C$. Write $a+C=hb+r$, with $0\le r<b$.
If $r>0$, its final $r$ classes share a batch with the next pass and move
to the end of that next pass. A class outside this deferred set therefore
appears among the first $C-r$ slots of the next pass, no later than batch
$h+q$. Its preceding appearance is no earlier than batch
$\lfloor a/b\rfloor+1$, so its gap is at most
$h+q-\lfloor a/b\rfloor-1\le2q-1$.
A deferred class instead appears in batch $h+1$ and returns no later than
$h+\lceil(C+r)/b\rceil$, giving a gap at most $q\le2q-1$.
When $r=0$, the first case covers every class. Every pass remains a
permutation and no step repeats a class for $b\le C$; consequently each
complete pass contributes one class visit and $C$ slots, which also proves
Proposition~\ref{prop:marginal}.
\end{proof}

The bound does not assert tightness at other ratios. Online storage
requires a separate statement.

\begin{proposition}[Churn-conditional bound]
\label{prop:churn}
Suppose $c$ remains resident between consecutive replay appearances.
Let $C_{\max}$ be the largest resident count during this interval and $I$
the number of class insertions into the deck. Then
\begin{equation}
  G_c(k)\le\Big\lceil\frac{2C_{\max}-1+I}{b}\Big\rceil.
  \label{eq:churnbound}
\end{equation}
\end{proposition}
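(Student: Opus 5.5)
We bound the number of class slots consumed strictly between the two appearances of $c$ and then convert slots into batches. Suppose $c$ appears at step $\tau=\tau_c(k)$ and next at $\tau'=\tau_c(k+1)$. Every step in between removes exactly $b$ identifiers from the deck. This needs $b\le C_t$ throughout, which we assume as in Proposition~\ref{prop:gap}. None of these identifiers is $c$. Hence if $S$ is the number of slots consumed after the slot of $c$ at step $\tau$ and before its slot at step $\tau'$, then $G_c(k)=\tau'-\tau\le\lceil (S+1)/b\rceil$. Indeed, $c$ sits at some position within batch $\tau$, the next $S$ slots follow it, and $c$ occupies slot $S+1$ counted from there. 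The rounding is the same as in Proposition~\ref{prop:gap} with the bound taken on slot indices. It therefore suffices to show $S\le 2C_{\max}-2+I$.

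\textbf{Step 1: decompose by passes.} Let the first appearance of $c$ lie in some pass $P_1$. Because $c$ stays resident, it is in the deck when $P_1$ ends and the deck is refilled with a permutation of $\mathcal{K}_t$. The refill contains $c$, so $c$ is visited in the next pass $P_2$ unless it was already visited. This covers the case where $c$ is one of the deferred classes of the boundary rule, which are moved to the end of $P_2$, or the case where $c$ reappears within $P_2$ itself. In every case the second appearance lies no later than the end of $P_2$. The slots in $S$ are then of two kinds: the slots of $P_1$ after $c$, and the slots of $P_2$ before $c$.

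\textbf{Step 2: count each part.} Within a pass, each identifier present in the deck is removed at most once. The remainder of $P_1$ after $c$ therefore consists of identifiers that were in the deck when $c$ was drawn, plus identifiers inserted afterwards. There are at most $C_{\max}-1$ of the former, since they are resident classes other than $c$; call the number of the latter $I_1$. Likewise, the slots of $P_2$ preceding $c$ come from the refill permutation, which has at most $C_{\max}$ entries including $c$ and hence at most $C_{\max}-1$ others, together with $I_2$ inserted identifiers. Evictions only remove entries and never increase either count. Since $I_1+I_2\le I$, we get $S\le 2(C_{\max}-1)+I$, which gives the stated bound $\lceil(2C_{\max}-1+I)/b\rceil$.

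\textbf{Main obstacle.} The delicate point is the pass-boundary rule. When a boundary falls inside a minibatch, the classes already drawn in that step are moved to the end of the new permutation. We must check that this never makes a single identifier count twice in $S$, and never pushes $c$ beyond $P_2$. The first thing we would try is to treat the identifiers consumed at the straddling step as belonging to $P_1$. They are then excluded from the count of $P_2$'s prefix, and moving them to the end only delays them past $c$ or coincides with $c$ being deferred. If $c$ itself is deferred, then its first appearance sits at the very end of $P_1$, so the $P_1$ remainder is nearly empty, and the $C_{\max}-1$ allowance absorbs the whole of $P_2$ before $c$. We would also verify that the rounding convention matches the one used in the fixed-set proof, so that the bound reduces to at most $\lceil(2C-1)/b\rceil$ when $I=0$. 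That value is consistent with $2\lceil C/b\rceil-1$ when $b\mid C$.
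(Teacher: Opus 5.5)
Your proof follows essentially the same approach as the paper. At most $C_{\max}-1$ slots remain in the current pass, at most $C_{\max}$ slots (including $c$) come from the next pass, each insertion adds one slot, and the total is divided by $b$. Counting from $c$'s own slot and converting with $\lceil (S+1)/b\rceil$ at the worst in-batch position is equivalent to the paper's choice of counting from the end of step $\tau_c(k)$.

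You add one hypothesis that the statement does not contain: $b\le C_t$ throughout. The paper covers the complementary case. Any later step with at most $b$ resident classes performs a complete round in ClassSlots, so it visits every resident class, including $c$, and ends the gap. Counting from the end of step $\tau_c(k)$ also removes the need to locate a deck slot for $c$ when step $\tau_c(k)$ itself used complete rounds, which do not advance the deck.
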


This conservative slot-count bound need not equal Equation~\eqref{eq:bound}
even when $I=0$. Both require residency: no retrieval rule bounds the latency
of a class that storage removes entirely. We distinguish
\emph{resident gaps}, whose endpoints and intervening steps retain the class
in memory, from \emph{wall gaps}, which also include periods of eviction.
Section~\ref{sec:dyngaps} tests the churn bound inside the online runs.

\subsection{Gap Bounds and Co-occurrence}
\label{sec:theory}

Any recurrent schedule has maximum rehearsal gap at least
$\lceil C/b\rceil$, since fewer than $\lceil C/b\rceil$ steps cannot cover
all $C$ classes. Proposition~\ref{prop:gap} gives
$G_c(k)\le 2\lceil C/b\rceil-1$. When $b\mid C$, attaining $G_c(k)=C/b$ for
every class and visit forces a fixed partition into batches: each class
always shares its batch with the same $b-1$ classes. This statement requires
exact equality and does not extend to schedules whose gaps only approach the
lower bound.

Each RPR pass induces a new uniform partition when $b\mid C$, with expected
pair frequency
\begin{equation}
  \mathbb{E}[Q_{cd}]=\frac{b}{C}\frac{b-1}{C-1}\quad(c\ne d).
  \label{eq:paircoverage}
\end{equation}
A fixed partition restricts each class to the $b-1$ classes in its block.
Under this divisibility condition, every distinct class pair has the co-occurrence probability in
Equation~\eqref{eq:paircoverage}. This co-occurrence result alone does not
imply an improvement in predictive accuracy.

Over $n$ rehearsal gaps per class, a union bound gives an i.i.d.\ maximum-gap scale
of $O(\log(nC)/\log(1/(1-p)))$ for $p=b/C<1$. A single-class lower bound
has the same logarithmic dependence. The RPR bound is independent of $n$.

When $b\mid C$ and $C/b$ increases, the ratio between the i.i.d.\ and RPR
gap variances approaches $6$, corresponding to a standard-deviation ratio
of $\sqrt{6}\approx2.449$. The four static configurations in
Table~\ref{tab:gaps} give standard-deviation ratios $2.41$, $2.42$, $2.38$,
and $2.43$. Appendix~\ref{app:theory} gives the derivations and the convex
gap-cost result. These results characterise the replay schedule and do not
imply an accuracy improvement; the learning consequences depend on the loss
and the evolving parameters.

\subsection{Interaction with ER-ACE}
\label{sec:law}

We first consider replay losses that decompose over examples.

\begin{definition}[Composition-blind replay]
\label{def:blind}
A replay term is \emph{composition-blind} if it decomposes over examples,
\begin{equation}
  \ell^{\mathrm{re}}(\theta;B)\;=\;\sum_{(x,y)\in B}\phi(\theta;x,y),
  \label{eq:blind}
\end{equation}
with per-example $\phi$ independent of the other examples, including their
labels and any joint forward computation.
\end{definition}

If Equation~\eqref{eq:blind} holds then
$\nabla_\theta\ell^{\mathrm{re}}(\theta;B)=\sum_{(x,y)\in B}\nabla_\theta\phi(\theta;x,y)$,
so one example's contribution is unaffected by its batchmates' labels, and
\begin{equation}
  \mathbb{E}_{B\sim\pi}\big[\nabla_\theta\ell^{\mathrm{re}}(\theta;B)\big]
  \;=\;
  \sum_{(x,y)\in M_t}\!\!\Pr_\pi\big[(x,y)\in B\big]\,
  \nabla_\theta\phi(\theta;x,y),
  \label{eq:blindexp}
\end{equation}
which depends only on the per-\emph{example} inclusion probabilities. At
fixed $(\theta,M_t)$, two policies that match those probabilities have the
same one-step expected replay gradient. Their full training trajectories
need not coincide: class-balanced and uniform retrieval need not match
per-example probabilities in a finite, uneven store, and temporal
dependence changes the future parameters at which gradients are evaluated.

Dark Experience Replay \citep{buzzega2020dark} satisfies
Definition~\ref{def:blind} for a batch-independent forward map. Its replay term is
$\alpha\,\|f_\theta(x)-z\|_2^2$ summed over the batch, a per-example
regression on stored logits that does not depend on the class label.

ER-ACE applies standard cross-entropy to replay examples and masks previously
observed classes that are absent from $A_t$ in the incoming-example softmax
\citep{caccia2022newinsights}. For a seen class $c$ absent from both batches,
the incoming term supplies no class-$c$ logit gradient, while replay supplies
only positive softmax contributions. To obtain an explicit parameter-level
statement, consider a nonempty replay batch and the linear-head classifier
$z_c(x)=w_c^\top h_\theta(x)+\beta_c$,
\begin{equation}
  \frac{\partial\mathcal{L}_t}{\partial\beta_c}
  =\sum_{x\in B_t}p_c(x)>0
  \qquad
  \bigl(c\in\mathcal{S}_t\setminus
  [\mathcal{C}(A_t)\cup\mathcal{C}(B_t)]\bigr).
  \label{eq:onesided}
\end{equation}
Here $p_c(x)$ is the replay softmax probability for class $c$ at input $x$.
A replay gap of $g$ steps has $g-1$ interior steps without rehearsal.
If the class also remains absent from the stream, each contributes
one-sided bias pressure; plain gradient descent decreases $\beta_c$ at
each such step. Gap length alone does not fix the magnitude, since the
probabilities change during training.

Equation~\eqref{eq:onesided} characterises the bias gradient and does not
imply a decrease in every class-$c$ logit. Shared representations couple
outputs through the empirical neural tangent kernel \citep{jacot2018neural};
momentum and weight decay introduce additional terms described in
Appendix~\ref{app:jacobian}. The experiments measure realised bias
displacement and mask--schedule interactions directly. The proposed mechanism
predicts sensitivity to long replay gaps at fixed mean rehearsal frequency.
The primary cosine classifier has no bias parameter, so
Equation~\eqref{eq:onesided} applies only to the linear-head diagnostic and
does not directly explain the gains of the primary classifier.

\section{Experiments}
\label{sec:experiments}

The experiments evaluate replay order under class-balanced retrieval,
rehearsal gaps in online runs, the linear-head ER-ACE mechanism,
and sensitivity to replay budget, storage policy, loss, and backbone.

\subsection{Experimental Setup}
\label{sec:protocol}

\paragraph{Benchmarks and streams.}
The evaluation uses class-incremental streams from CIFAR-10 and CIFAR-100
\citep{krizhevsky2009learning}, Tiny-ImageNet \citep{le2015tiny}, and
ImageNet-R \citep{hendrycks2021many}. The CIFAR-10 control uses a replay batch
that accommodates every class once all classes are resident. The main experiments
use CIFAR-100 and Tiny-ImageNet with balanced streams or exponential
long-tailed streams at imbalance factors $\rho\in\{10,100\}$. Within each
task, a seeded random permutation assigns class rank $r$, and the retained
example count is
$n_r=\max\{1,\operatorname{round}(n_{\max}\rho^{-r/(C_{\mathrm{task}}-1)})\}$.
This is a within-task adaptation of the standard artificial long-tailed CIFAR
construction \citep{cui2019classbalanced}. ImageNet-R is used for the
pretrained-backbone evaluation.

\paragraph{Methods and baselines.}
ER-ACE with a randomly initialised ResNet-18 \citep{DeepRes} serves as the
primary learner. We additionally examine DER and DER++
\citep{buzzega2020dark}, and Online Continual
Learning through Mutual Information Maximization (OCM)
\citep{guo2022ocm}. These additional losses test whether retrieval effects
depend on the replay objective. The pretrained-backbone study uses a
224-pixel ViT-B/16
\citep{dosovitskiy2021imageworth16x16words}, pretrained on ImageNet-21k and
AugReg fine-tuned on ImageNet-1k \citep{steiner2022train}. We load the exact
\path{vit_base_patch16_224.augreg2_in21k_ft_in1k} checkpoint from timm
\citep{wightman2019timm}. Retrieval policies include
uniform sampling, i.i.d.\ class-balanced sampling, fixed class cycling, MIR,
and RPR. Reservoir sampling and Balanced Reservoir Sampling provide the two
storage policies in the storage--retrieval study.

\paragraph{Evaluation.}
We report final task-mean test accuracy and mean paired differences in percentage
points. Comparisons use ten paired random seeds and two-sided $95\%$
Student-$t$ confidence intervals. Full paired-test results include counts of
positive paired differences. We define an effect as \emph{material} when
its relative magnitude exceeds $2\%$ and its confidence interval excludes
zero.
Separate validation runs select the tested method and regimes without entering
test comparisons.

\paragraph{Implementation details.}
All experiments process each stream once. The primary ResNet-18 ER-ACE
setting uses SGD with learning rate $0.03$, no momentum or weight decay,
incoming batch size $32$, and replay batch size $8$. CIFAR-100 and Tiny-ImageNet use
ten class-incremental tasks of ten and twenty classes, respectively, with
random crops, horizontal flips, and dataset-specific normalisation. Task
boundaries organise the stream and evaluation, but do not enter the replay
rule. Buffer capacities range from $200$ to $5{,}120$ where indicated.
The pretrained ViT setting uses a replay batch of $32$ examples unless the
replay-budget ablation states otherwise. Accuracy is
evaluated with an exponential moving average of the weights over a
$1{,}024$-sample horizon \citep{soutif2023temporal}, held fixed within every
comparison. Section~\ref{sec:emarobust} reports the corresponding comparison
without the EMA readout. Appendix~\ref{app:algorithm} gives the RPR
pseudocode, and the remaining appendices provide complete paired statistics
and additional controls.

\paragraph{Classifier and comparison scope.}
The primary ER-ACE accuracy study uses a normalised cosine classifier with
scale $10$ and no bias. The bias-drift, mask--schedule, renewal, and buffer
diagnostics use an ordinary linear classifier with bias, with the same
configuration across each block. This permits bias-displacement measurements
and limits direct mechanistic attribution to that head. The from-scratch
accuracies characterise a one-pass, small-replay-budget regime; they are not
a reproduction of published ER-ACE scores under other training protocols.
Comparisons estimate retrieval effects within each protocol, including the
pretrained setting, and do not establish generality across training regimes.

\subsection{Results}
\label{sec:results}
\begingroup
\setlength{\intextsep}{4pt}
\paragraph{Retrieval order and storage balance.}
\label{sec:seams}

\begin{table}[t!]
\caption{Storage and retrieval contrasts in final accuracy. Entries give
differences in percentage points (relative differences in parentheses), with
$95\%$ paired confidence intervals where shown. Storage compares BRS with
reservoir at uniform retrieval; draw compares i.i.d.\ class-balanced with
uniform retrieval at reservoir storage; both compares BRS with RPR against
reservoir with uniform retrieval. RPR comparisons specify the storage rule
($\mathcal{R}$: reservoir). For RPR--fixed contrasts, stars denote
Holm-adjusted $p<.05/.01/.001$. Full accuracies and paired tests appear in
Appendices~\ref{app:accuracies} and~\ref{app:statistics}.}
\label{tab:contrasts}
\centering
\footnotesize
\setlength{\tabcolsep}{3pt}
\resizebox{\linewidth}{!}{%
\begin{tabular}{l rrrr}
\toprule
Contrast & \shortstack{CIFAR-100\\Balanced} & \shortstack{CIFAR-100\\LT10} & \shortstack{CIFAR-100\\LT100} & \shortstack{Tiny-ImageNet\\LT10} \\
\midrule
storage
 & \shortstack{$+0.15$ \scriptsize($+0.8\%$)\\ \scriptsize$[-0.44, 0.73]$}
 & \shortstack{$+1.86$ \scriptsize($+19.7\%$)\\ \scriptsize$[1.33, 2.40]$}
 & \shortstack{$+0.88$ \scriptsize($+13.2\%$)\\ \scriptsize$[0.48, 1.28]$}
 & \shortstack{$+0.99$ \scriptsize($+14.0\%$)\\ \scriptsize$[0.48, 1.51]$} \\
draw
 & \shortstack{$+0.67$ \scriptsize($+3.6\%$)\\ \scriptsize$[0.10, 1.23]$}
 & \shortstack{$+1.80$ \scriptsize($+19.0\%$)\\ \scriptsize$[1.30, 2.30]$}
 & \shortstack{$+1.31$ \scriptsize($+19.6\%$)\\ \scriptsize$[0.54, 2.08]$}
 & \shortstack{$+0.66$ \scriptsize($+9.4\%$)\\ \scriptsize$[0.23, 1.10]$} \\
\midrule
RPR--i.i.d. $\mid$ \textsc{brs}
 & \shortstack{$+1.22$ \scriptsize($+6.3\%$)\\ \scriptsize$[0.43, 2.02]$}
 & \shortstack{$+1.38$ \scriptsize($+12.1\%$)\\ \scriptsize$[1.02, 1.74]$}
 & \shortstack{$+0.58$ \scriptsize($+7.0\%$)\\ \scriptsize$[0.25, 0.91]$}
 & \shortstack{$+1.77$ \scriptsize($+22.4\%$)\\ \scriptsize$[1.37, 2.17]$} \\
RPR--fixed $\mid$ \textsc{brs}
 & \shortstack{$+1.91^{***}$ \scriptsize($+10.3\%$)\\ \scriptsize$[1.48, 2.35]$}
 & \shortstack{$+1.03^{**}$ \scriptsize($+8.8\%$)\\ \scriptsize$[0.56, 1.51]$}
 & \shortstack{$+0.14$ \scriptsize($+1.6\%$)\\ \scriptsize$[-0.14, 0.43]$}
 & \shortstack{$+1.13^{***}$ \scriptsize($+13.2\%$)\\ \scriptsize$[0.84, 1.42]$} \\
RPR--i.i.d. $\mid \reservoir$
 & \shortstack{$+1.19$ \scriptsize($+6.1\%$)\\ \scriptsize$[0.75, 1.63]$}
 & \shortstack{$+1.40$ \scriptsize($+12.5\%$)\\ \scriptsize$[1.06, 1.74]$}
 & \shortstack{$+0.72$ \scriptsize($+9.0\%$)\\ \scriptsize$[0.29, 1.16]$}
 & \shortstack{$+1.67$ \scriptsize($+21.5\%$)\\ \scriptsize$[1.28, 2.06]$} \\
RPR--fixed $\mid \reservoir$
 & \shortstack{$+2.06^{***}$ \scriptsize($+11.1\%$)\\ \scriptsize$[1.53, 2.58]$}
 & \shortstack{$+0.59^{*}$ \scriptsize($+4.9\%$)\\ \scriptsize$[0.13, 1.05]$}
 & \shortstack{$+0.47$ \scriptsize($+5.6\%$)\\ \scriptsize$[-0.04, 0.97]$}
 & \shortstack{$+0.97^{**}$ \scriptsize($+11.5\%$)\\ \scriptsize$[0.46, 1.49]$} \\
\midrule
both
 & $+1.80$ \scriptsize($+9.6\%$)
 & $+3.38$ \scriptsize($+35.8\%$)
 & $+2.26$ \scriptsize($+33.8\%$)
 & $+2.60$ \scriptsize($+36.6\%$) \\
\bottomrule
\end{tabular}}
\end{table}

The primary comparison replaces independent class-balanced retrieval with RPR
at fixed storage, learner, and replay budget.
Table~\ref{tab:contrasts} reports gains of $0.72$--$1.67$ percentage points
under reservoir storage. These online comparisons test the scheduling
policy as a whole; Proposition~\ref{prop:marginal} does not guarantee
identical finite-run replay frequencies in their evolving memories.
Crossing storage and retrieval rules evaluates scheduling at different levels
of storage imbalance. On CIFAR-100 LT10, BRS reduces the
occupancy spread from $15.4\times$ to $1.02\times$. Within BRS, RPR increases accuracy
by $1.38$ points over independent balanced retrieval, with all ten paired
differences positive. The mean RPR effect is positive under both storage
rules in every tested stream.

Fixed cycling provides a bounded-gap comparator without permutation
renewal. RPR has higher mean accuracy in all eight storage--stream combinations, but
only six intervals exclude zero; both CIFAR-100 LT100 contrasts remain
inconclusive. All eight RPR--i.i.d.\ contrasts and six RPR--fixed contrasts
remain significant after the within-family Holm adjustment
(Appendix~\ref{app:statistics}).

\paragraph{Rehearsal gaps under changing memory.}
\label{sec:dyngaps}

Telemetry evaluates the churn-conditional bound as memory changes.
Across CIFAR-100 LT10, Tiny-ImageNet LT10, and Tiny-ImageNet LT100, RPR's mean per-run
resident maxima are $23.9$, $48.1$, and $46.3$ steps, with no violation of
the corresponding churn-conditional bound in any run
(Appendix~\ref{app:dyngaps}). Independent balanced draws have similar
aggregate mean gaps, but mean per-run maxima of $81$--$176$ steps. Similar means do not
establish matched per-class frequencies.

\begin{figure}[t]
\centering
\includegraphics[width=\linewidth]{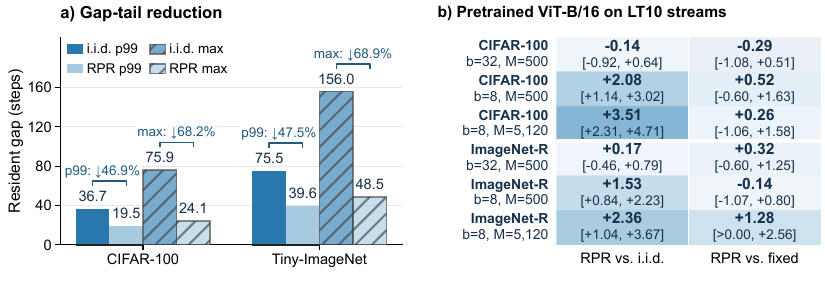}
\caption{Temporal coverage and pretrained ViT evaluation. \textbf{a}, Linear-head ER-ACE
resident-gap 99th percentiles and maxima on LT10 streams (ten-seed means); arrows show relative reductions.
\textbf{b}, RPR minus indicated comparator (percentage points; $95\%$ paired
confidence intervals, ten seeds).
Row labels give replay batch $b$ and buffer capacity $M$;
colours are centred at zero. Table~\ref{tab:gap-opportunity} gives baseline gaps.}
\label{fig:mechanism-summary}
\end{figure}

Tiny-ImageNet LT100 has $7.4$ class deactivations per run. Its maximum wall
gap exceeds the resident maximum because it includes eviction periods,
during which the class is unavailable to any retrieval schedule. Relative to
independent balanced retrieval, mean accuracy differences in the from-scratch settings are positive in all
stream-frequency thirds, while macro per-class forgetting falls by
$1.74$--$2.75$ points (Appendix~\ref{app:groups}).

\paragraph{Linear-head bias displacement and mask interventions.}
\label{sec:gapdrift}

Under masked linear-head ER-ACE, every completed joint-absence episode in
the telemetry has negative classifier-bias displacement. Mean displacement
magnitude increases across the duration bins. For CIFAR-100 under independent
balanced retrieval, the mean changes from
$-0.00191$ at $1$--$4$ steps to $-0.00926$ at $\geq33$ steps.
Tiny-ImageNet shows the same ordering (Appendix~\ref{app:review-followup}).
These measured parameter changes are consistent with the accumulated bias
pressure in Equation~\eqref{eq:onesided}. The gradient-sign result alone
does not establish an accuracy effect.

\begin{wraptable}{r}{0.48\linewidth}
\setlength{\abovecaptionskip}{0pt}
\setlength{\belowcaptionskip}{3pt}
\centering
\caption{Mask--schedule interaction in linear-head ER-ACE on LT10 streams.
RPR minus i.i.d.\ accuracy (percentage points; $95\%$ paired CI);
interaction: masked minus unmasked effect.}
\label{tab:mask-interaction}
\resizebox{\linewidth}{!}{%
\begin{tabular}{lcc}
\toprule
 & CIFAR-100 & Tiny-ImageNet \\
\midrule
Masked      & $+1.23\;[+0.53,+1.92]$ & $+1.16\;[+0.73,+1.59]$ \\
No mask     & $+0.30\;[-0.31,+0.91]$ & $+0.09\;[-0.33,+0.52]$ \\
Interaction & $+0.93\;[+0.37,+1.49]$ & $+1.06\;[+0.68,+1.45]$ \\
\bottomrule
\end{tabular}%
}
\end{wraptable}

In the linear-head diagnostics, RPR reduces the resident-gap $99$th percentile by
about $47\%$ on both datasets (Figure~\ref{fig:mechanism-summary}a).
With the incoming mask, RPR gains $1.23$ and $1.16$ points; without it, both
intervals include zero. The paired mask--schedule interactions are $0.93$ and
$1.06$ points with intervals above zero (Table~\ref{tab:mask-interaction}),
supporting attenuation after mask removal.

Reusing a shuffled pass for $4$ or $16$ traversals reduces pair coverage
and entropy, and fixed cycling reduces both further. All these bounded
schedules have higher mean accuracy than independent balanced retrieval, but
accuracy is non-monotone in pair diversity, and fixed cycling has the shortest
gaps without the largest gain. These controls do not isolate the gap tail from
other temporal dependencies.

\paragraph{Dependence on replay loss, storage, budget, and backbone.}
\label{sec:lawresults}
\label{sec:failed}

Loss interventions evaluate the scope of the ER-ACE mechanism. Removing
its mask, or adding it to DER++, leaves the balanced-draw versus uniform
contrast inconclusive (Appendix~\ref{app:law}). RPR increases accuracy by $1.02$ and $0.87$ points
over independent balanced retrieval in masked DER++ on CIFAR-100 and Tiny-ImageNet, respectively.
DER, DER++/Refresh, and OCM yield inconclusive or dataset-dependent results;
these comparisons do not support a common trajectory-level effect based on
the loss's one-step form.

\label{sec:imbalance}
\label{sec:vitscope}
\begin{wraptable}[19]{r}{0.49\linewidth}
\setlength{\abovecaptionskip}{0pt}
\setlength{\belowcaptionskip}{3pt}
\centering
\caption{Pretrained ViT-B/16. Gap: i.i.d.\ resident-gap $99$th percentile.
$\Delta$: RPR minus i.i.d.\ final task-mean accuracy (percentage points;
$95\%$ paired CI).}
\label{tab:gap-opportunity}
\footnotesize
\setlength{\tabcolsep}{3pt}
\begin{tabular*}{\linewidth}{@{\extracolsep{\fill}}lrr}
\toprule
Setting $(b,M)$ & Gap & $\Delta$ accuracy \\
\midrule
\multicolumn{3}{l}{\textbf{CIFAR-100}}\\
Bal.\ $(32,500)$ & 9.00 & $-1.09\;[-2.03,-0.15]$ \\
Bal.\ $(16,500)$ & 18.30 & $+0.30\;[-1.14,+1.73]$ \\
Bal.\ $(8,500)$ & 37.60 & $+0.66\;[-0.27,+1.60]$ \\
LT10 $(32,500)$ & 8.00 & $-0.14\;[-0.92,+0.64]$ \\
LT10 $(8,500)$ & 34.20 & $+2.08\;[+1.14,+3.02]$ \\
LT10 $(8,5120)$ & 36.10 & $+3.51\;[+2.31,+4.71]$ \\
\addlinespace
\multicolumn{3}{l}{\textbf{ImageNet-R}}\\
Bal.\ $(32,500)$ & 15.80 & $+0.22\;[-0.76,+1.20]$ \\
Bal.\ $(16,500)$ & 31.70 & $+0.48\;[-0.27,+1.23]$ \\
Bal.\ $(8,500)$ & 63.90 & $+0.85\;[-0.33,+2.02]$ \\
LT10 $(32,500)$ & 15.60 & $+0.17\;[-0.46,+0.79]$ \\
LT10 $(8,500)$ & 61.40 & $+1.53\;[+0.84,+2.23]$ \\
LT10 $(8,5120)$ & 67.60 & $+2.36\;[+1.04,+3.67]$ \\
\bottomrule
\end{tabular*}
\end{wraptable}

Buffer-capacity comparisons evaluate sensitivity to class residency.
In the CIFAR-100 LT10 buffer sweep, class deactivations fall from $16.1$
to zero from capacity $200$ to $5{,}120$, while RPR shortens gaps throughout.
Its mean accuracy difference varies non-monotonically, with
endpoint estimates of $0.30$ and $1.23$ points (Appendix~\ref{app:review-followup}). Storage
controls class availability, and retrieval controls visits during residency.

The pretrained ViT controls evaluate stream and replay-budget dependence.
On balanced streams with buffer $500$ and $b=32$, RPR changes accuracy by
$-1.09$ points on CIFAR-100 and $+0.22$ on ImageNet-R. Smaller replay batches
have longer gap tails, but show no material positive effect on these balanced
streams (Table~\ref{tab:gap-opportunity}). The gap statistics summarise the
evolving resident set, not a stationary full-vocabulary process.

On LT10 streams with the same buffer and $b=8$, RPR increases accuracy by
$2.08$ and $1.53$ points on the respective datasets, with both intervals
above zero. Raising $b$ to $32$ reduces
the independent-draw gap $99$th percentile from $34.2$ to $8.0$ steps on
CIFAR-100 and from $61.4$ to $15.6$ on ImageNet-R. The corresponding accuracy
effects are $-0.14$ and $+0.17$ points, and both intervals include zero.
Within the same ten seeds, the RPR--i.i.d.\ effect decreases by $2.22$
points ($95\%$ CI $[-3.81,-0.63]$) and $1.37$ points ($[-2.01,-0.72]$).
Mean accuracy across the ten task-boundary evaluations shows the same
small- versus large-batch pattern (Appendix~\ref{sec:trajectory}).

Pretrained effects depend on stream and replay budget. Three of four LT10 ViT
contrasts with fixed cycling include zero; the fourth is strictly positive
before rounding (Figure~\ref{fig:mechanism-summary}b). These controls establish
neither small batches nor imbalance as necessary. Readout and MIR controls appear in
Appendices~\ref{sec:emarobust} and~\ref{app:additional}.

\par
\endgroup
\section{Conclusion and Limitations}
\label{sec:conclusion}

Class-balanced retrieval does not determine replay spacing. On a fixed
resident set, RPR bounds rehearsal gaps and renews class co-occurrence without
additional replay examples or forward passes. It improves final accuracy
across the primary ER-ACE storage settings, while pretrained gains concentrate
in the tested LT10 small-batch settings. Linear-head diagnostics are consistent
with accumulated bias pressure during joint absence, but the controls do not
isolate the gap tail. Marginal preservation under churn remains unestablished,
and effects depend on the learner and regime.

These conclusions have a defined scope. The fixed-set gap bound is vacuous
when $b\ge C$ and is conditional on storage under churn. The CIFAR-10 control
at $b=32$, balanced pretrained streams, and matched LT10 controls at $b=32$
show no material positive gain. Evidence covers single-pass CIFAR-100,
Tiny-ImageNet, and ImageNet-R, not multi-epoch, offline, or ImageNet-scale
continual learning. Reused-pass and fixed-cycle controls change gaps and
co-occurrence jointly, and the linear-head bias diagnostic does not fully
explain cosine-head or cross-host results.

\clearpage
\section{AI Use Statement}

We use AI as auxiliary tools for linguistic refinement, restructuring and
formatting text and tables, and assistance with experiment orchestration,
analysis, and verification code. The authors determine the research
questions, methodology, experimental design, statistical criteria, and
interpretation; they review generated code and recompute every reported
quantity from per-run records. AI output does not constitute evidence and
requires author verification. The authors retain all decisions about content,
wording, analysis, and presentation.

\bibliography{iclr2027_conference}
\bibliographystyle{iclr2027_conference}

\clearpage
\appendix
\section*{Appendix}
\section{Reproducibility Statement}
\label{app:reproducibility}

Every experiment records its configuration, command line, random seed,
evaluation split, and campaign identifier. The $256$ validation runs comprise
$96$ hybrid-schedule runs, $96$ replay-scope runs, and $64$ stream-regime
runs; none contributes to a test contrast. EMA and live-readout arms are
separate training runs rather than repeated evaluations of one checkpoint.
For the long-tailed construction,
$r\in\{0,\ldots,C_{\mathrm{task}}-1\}$, $n_{\max}$ is the largest original
class count in that task, and \texttt{round} uses ties-to-even rounding. The
permutation seed is the run's imbalance seed plus $1009$ times the zero-based
task index. The two hybrid candidates assign $16$ or $24$ of $32$ replay
slots to RPR and fill the remainder by an independent balanced draw.
The release includes the scheduler, integrations for each continual-learning
method, experiment configurations, and analysis scripts. Test comparisons
use paired seeds, while validation runs remain separate from test evaluation.
The replay-budget extension reuses the ten LT10 seeds to estimate a paired
$b=32$ minus $b=8$ interaction and is not treated as an independent
confirmation block.
Unit tests cover the scheduler and its limiting cases.

\section{Ethics Statement}

This work trains image classifiers on CIFAR-10, CIFAR-100, Tiny-ImageNet, and
ImageNet-R, all established public benchmarks. We collect no new data,
recruit no human subjects, and use no annotations about personal identity.
As with other image benchmarks, these datasets may reflect biases in their
source data. The methods in this study change which classes a learner
rehearses, and Section~\ref{sec:imbalance} shows that their effects depend on
the stream and replay budget. Rare classes can benefit in the from-scratch
long-tailed settings but can also lose accuracy in a pretrained control.
Retrieval policies of
this kind should therefore be evaluated per class rather than only by
aggregate accuracy when class-level disparities matter.

\section{Measured rehearsal gaps}
\label{app:gaps}

Table~\ref{tab:gaps} isolates the scheduling statistics from learning by
holding the resident class set and balanced memory fixed for $3{,}000$ steps.
The comparison between class-balanced retrieval (\textsc{cb}) and RPR
holds the long-run class-visit marginal fixed; uniform example retrieval
(\textsc{uni}) provides an additional reference. The balanced draw takes a
new uniform random class permutation each step, whereas RPR follows
Algorithm~\ref{alg:cursor}. Uniform retrieval samples example slots and
can select multiple examples of the same class within a batch, so its
class-inclusion probability need not match that of the two class-based
policies. The mean describes how
often a class returns, whereas the upper quantiles and maximum describe
the long absences that the mean alone leaves unconstrained.

\begin{table}[htbp]
\caption{Rehearsal-gap distributions on a fixed balanced memory. All gaps are in steps.}
\label{tab:gaps}
\centering
\small
\begin{tabular}{rr l rrrr r}
\toprule
$C$ & $b$ & policy & mean & p50 & p95 & p99 & $\max_{c,k} G_c(k)$ \\
\midrule
\multirow{3}{*}{100} & \multirow{3}{*}{8}
  & \textsc{uni} & 12.87 & 9 & 38 & 57 & 120 \\
& & \textsc{cb}  & 12.45 & 9 & 36 & 56 & 117 \\
& & \textsc{rpr} & 12.50 & 12 & \textbf{21} & \textbf{23} & \textbf{25} $=2\lceil 100/8\rceil-1$ \\
\midrule
\multirow{3}{*}{100} & \multirow{3}{*}{32}
  & \textsc{uni} & 3.62 & 3 & 10 & 15 & 32 \\
& & \textsc{cb}  & 3.12 & 2 & 8 & 12 & 37 \\
& & \textsc{rpr} & 3.12 & 3 & \textbf{5} & \textbf{6} & \textbf{7} $=2\lceil 100/32\rceil-1$ \\
\midrule
\multirow{3}{*}{200} & \multirow{3}{*}{8}
  & \textsc{uni} & 25.21 & 18 & 75 & 115 & 307 \\
& & \textsc{cb}  & 24.82 & 17 & 73 & 112 & 263 \\
& & \textsc{rpr} & 25.00 & 25 & \textbf{42} & \textbf{46} & \textbf{49} $=2\lceil 200/8\rceil-1$ \\
\midrule
\multirow{3}{*}{200} & \multirow{3}{*}{32}
  & \textsc{uni} & 6.71 & 5 & 19 & 29 & 75 \\
& & \textsc{cb}  & 6.24 & 4 & 18 & 27 & 64 \\
& & \textsc{rpr} & 6.25 & 6 & \textbf{10} & \textbf{11} & \textbf{13} $=2\lceil 200/32\rceil-1$ \\
\bottomrule
\end{tabular}
\end{table}

Across all four configurations, RPR and \textsc{cb} have nearly identical
mean gaps, but RPR approximately halves the $99$th percentile. For example,
at $C=200$ and $b=8$, the mean changes from $24.82$ to $25.00$ steps while
the maximum falls from $263$ to $49$. Each RPR maximum equals the static
bound in Proposition~\ref{prop:gap}. These measurements illustrate the intended
change in the gap distribution; they do not by themselves establish an
accuracy gain. Appendix~\ref{app:dyngaps} tests the corresponding guarantee
when the resident set changes during learning.

\section{The schedule in pseudocode}
\label{app:algorithm}
\begin{algorithm}[H]
\caption{Randomised-pass replay as used in the experiments.
The class deck persists across calls; example lists are freshly shuffled
at each replay step. The resident set is read from the current memory; no
future class identities or total class count are given to the algorithm.}
\label{alg:cursor}
\begin{algorithmic}[1]
\STATE {\bfseries state:} deck $D\leftarrow[\,]$, known classes $K\leftarrow\emptyset$
\STATE {\bfseries input:} memory $M$, requested replay size $b$
\STATE $n\leftarrow\min(b,|M|)$; $B\leftarrow[\,]$
\IF{$n=0$}
  \STATE {\bfseries return} $B$
\ENDIF
\STATE $\mathcal{K}\leftarrow\mathcal{C}(M)$
\STATE $D\leftarrow[\,c\in D : c\in\mathcal{K}\,]$
\FORALL{$c\in\mathcal{K}\setminus K$}
  \STATE insert $c$ into $D$ at a uniformly random position
\ENDFOR
\STATE $K\leftarrow\mathcal{K}$
\FORALL{$c\in\mathcal K$}
  \STATE $H_c\leftarrow$ uniformly shuffled list of stored example indices of class $c$
\ENDFOR
\STATE $g\leftarrow0$
\WHILE{$|B|<n$ and $g\le|M|$}
  \STATE $V\leftarrow\operatorname{ClassSlots}(\mathcal K,n-|B|,D)$; $g\leftarrow g+1$
  \FORALL{$c\in V$}
    \IF{$H_c\ne[\,]$ and $|B|<n$}
      \STATE pop one index from $H_c$ and append it to $B$
    \ENDIF
  \ENDFOR
\ENDWHILE
\STATE {\bfseries return} the examples indexed by $B$
\end{algorithmic}
\end{algorithm}

\noindent\textbf{ClassSlots$(\mathcal K,k,D)$.}
Let $C=|\mathcal K|$ and $q=\lfloor k/C\rfloor$. First output $q$ complete,
independently shuffled class permutations, without advancing $D$.
For the remaining $k-qC$ slots, remove class identifiers from $D$. Whenever $D$ empties,
refill it with a uniformly shuffled permutation of $\mathcal K$, moving
classes already output in this partial round to the end while preserving
their relative order. Persist the remaining deck for the next call.
If $k<C$, only this persistent-deck part operates. If $k\ge C$, the complete
rounds ensure that every resident class is visited before the scheduled
remainder. Removing example indices from $H_c$ samples without replacement
within a replay batch; the same stored example can return at a later step.
The bounded loop matches the implementation's guard against non-progress and
may return fewer than $n$ examples when a large-batch request repeatedly
visits classes whose within-step lists are exhausted. This edge case is
outside the fixed-set analysis below, which assumes $b\le C$ and enough
stored examples for every requested class slot.

\section{Additional Analysis and Proofs}
\label{app:theory}

We denote RPR by $\pi_{\mathrm{rp}}$ in the following derivations.

\subsection{Static Schedule Diagnostics}

Table~\ref{tab:gaps} reports the static schedule diagnostics. Across the four
configurations, the absolute deviation of the mean gap from $C/b$ is at most
$0.005$ for RPR and $0.18$ for the i.i.d.\ balanced draw. The corresponding
i.i.d.-to-RPR ratios for the $99$th-percentile gap are $2.0$--$2.5$, and
the ratios for the observed maximum are $4.7$--$5.4$. In all four
configurations, the observed RPR maximum equals $2\lceil C/b\rceil-1$.

\subsection{Composition Dependence and the ER-ACE Mask}
\label{app:composition}

The per-step marginal corresponding to Equation~\eqref{eq:timeavg} is
\begin{equation}
  \mu_c^{(t)}(\pi)=
  \Pr[c\in\mathcal{C}(B_t)\mid\mathcal{F}_t],
  \qquad c\in\mathcal{K}_t.
  \label{eq:marginal}
\end{equation}
Equality of time-averaged marginals does not imply equality of these
history-conditional laws.

Definition~\ref{def:blind} and Equation~\eqref{eq:blindexp} characterise the
one-step replay gradient. ER-ACE introduces an additional asymmetry through
the incoming loss.
The from-scratch ResNet-18 uses training-mode BatchNorm, so its forward map
is batch-dependent and Equation~\eqref{eq:blindexp} is an idealised scope
statement for that configuration. The ViT instead uses per-example LayerNorm.

ER-ACE's incoming loss retains current-batch and unseen output classes:
\begin{equation}
  \ell^{\mathrm{on}}(\theta;A_t)
  =\sum_{(x,y)\in A_t}-\log
  \frac{\exp z_y(x)}
  {\sum_{c\in\mathcal{C}(A_t)\cup\mathcal{U}_t}\exp z_c(x)},
  \qquad \mathcal{U}_t=\mathcal{Y}\setminus\mathcal{S}_t.
  \label{eq:ace}
\end{equation}
Here $\mathcal{Y}$ denotes the host classifier's output vocabulary; RPR
does not inspect that vocabulary or use unseen labels to construct its
deck. For $c\in\mathcal{S}_t\setminus\mathcal{C}(A_t)$,
\begin{equation}
  \frac{\partial\ell^{\mathrm{on}}}{\partial z_c}=0.
  \label{eq:nogradient}
\end{equation}
For a resident replay gap of $g>1$ starting at $t=\tau_c(k)$, if $c$ is also
absent from every interior incoming batch, Equation~\eqref{eq:onesided} gives
\begin{equation}
  \sum_{s=1}^{g-1}\frac{\partial\mathcal{L}_{t+s}}{\partial\beta_c}
  =\sum_{s=1}^{g-1}\sum_{x\in B_{t+s}}p_c(x)>0.
  \label{eq:accumulate}
\end{equation}
There are $g-1$ interior steps, since the final endpoint is a replay visit.
The sum characterises accumulated bias-gradient pressure. Its magnitude is
not determined by gap length alone, and Equation~\eqref{eq:accumulate} does
not imply a universal relationship between gap length and final accuracy.

\subsection{From Logit Gradients to Parameter Updates}
\label{app:jacobian}

The logit-space gradient underlying Equation~\eqref{eq:onesided} is
$\partial\ell^{\mathrm{re}}/\partial z_c(x)=p_c(x)>0$ at each replay input
during joint absence. The sign of
$\partial\ell^{\mathrm{re}}/\partial z_c(x)$ does not determine the sign of
the parameter-induced change in $z_c(x')$.
Under a gradient step on $\theta$ the induced logit movement at
an arbitrary input $x'$ is, to first order,
\begin{equation}
  \Delta z_c(x')
  \;\approx\;
  -\,\eta\,J_c(x')^{\!\top}\nabla_\theta\mathcal{L},
  \qquad
  J_c(x')=\nabla_\theta z_c(x'),
  \label{eq:jacobian}
\end{equation}
so a positive $\partial\ell/\partial z_c$ at the replayed points does not by
itself fix the sign of $\Delta z_c(x')$ at other points: the shared
representation couples classes and inputs through
$J_c(x')^{\!\top}J_{c'}(x)$, the empirical neural tangent kernel
\citep{jacot2018neural}. For the class-specific parameters of the final
linear layer, the bias gradient has a fixed sign during joint absence.
Writing $z_c(x)=w_c^{\!\top}h_\theta(x)+\beta_c$, the bias
gradient is $\partial\mathcal{L}/\partial\beta_c=\sum_{x\in B_t}p_c(x)>0$ at
every step of joint absence. Under plain gradient descent, the bias update
is negative at each such step. Momentum, adaptive preconditioning, and
weight decay add terms that depend on past gradients, the preconditioner,
and the current parameter value; the gradient sign alone does not establish
an optimizer-independent update sign. At a single plain
gradient step, the update of $w_c$ likewise has non-positive projection
along its current gradient, but that direction can change across steps.
The shared-trunk contribution remains sign-indefinite.
Section~\ref{sec:gapdrift} measures the realised bias displacement.

\subsection{Proof of the Churn-Conditional Bound}

We prove Proposition~\ref{prop:churn}, the counterpart of
Equation~\eqref{eq:bound} for a resident class set that changes over time.

\begin{proof}[Proof of Proposition~\ref{prop:churn}]
Count from the end of step $\tau_c(k)$, rather than from the position of
$c$ inside its minibatch. If the current pass still contains $c$, at most
$C_{\max}$ existing slots remain up to its visit. Otherwise, at most
$C_{\max}-1$ existing slots remain in that pass, followed by at most
$C_{\max}$ slots up to $c$ in the next pass. An insertion adds at most
one further slot before this visit; deletions remove slots. Hence at most
$2C_{\max}-1+I$ slots remain. A refill within a later step cannot defer
$c$ unless that step already visits it, in which case the waiting interval
ends. Each intervening step either supplies $b$ distinct class slots or,
when fewer than $b$ classes remain resident, visits every resident class
and hence $c$. Since the count starts at a step boundary, at most
$\lceil(2C_{\max}-1+I)/b\rceil$ further steps are required.
\end{proof}

The bound requires $c$ to remain resident. If $c\notin\mathcal{K}_t$ during an
interval, no retrieval policy can include it in $B_t$, so no retrieval rule
can bound the corresponding wall gap.

\subsection{Additional Scheduling Results}

The following results give a worst-case lower bound, characterise schedules
that attain the lower bound exactly when $b\mid C$, and describe the
stream-length dependence of the maximum rehearsal gap. A final result gives
a convex cost bound at fixed mean rehearsal gap.

\begin{proposition}[Worst-case gap lower bound]
\label{prop:floor}
For any recurrent retrieval policy on a fixed resident set,
\begin{equation}
  \max_{c}\ \sup_k G_c(k)\ \ge\ \Big\lceil \tfrac{C}{b}\Big\rceil .
  \label{eq:floor}
\end{equation}
\end{proposition}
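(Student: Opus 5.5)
The plan is a pigeonhole argument over a window of consecutive steps. I interpret ``recurrent'' as meaning that every class of the fixed resident set appears infinitely often, so each $G_c(k)$ is defined for every $k$. Each replay batch contains at most $b$ distinct classes, because $|B_t|\le b$ and $\mathcal{C}(B_t)$ can have no more elements than $B_t$. I argue by contradiction and suppose that every gap satisfies $G_c(k)\le g$ with $g=\lceil C/b\rceil-1$, so $g<C/b$ and hence $gb<C$.

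The key step converts a gap bound into a covering property. Fix a step $t_0$ after every class has appeared at least once; this exists by recurrence. For any class $c$, let $\tau_c(k)$ be its last appearance at or before $t_0$. Then $\tau_c(k+1)>t_0$, and $\tau_c(k+1)=\tau_c(k)+G_c(k)\le t_0+g$. So every class appears in the window of steps $t_0+1,\dots,t_0+g$.

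To count within this window, note that the window contains $g$ steps and each step covers at most $b$ classes. At most $gb<C$ classes can therefore appear in it, which contradicts the requirement that all $C$ classes appear. Hence some gap is at least $g+1=\lceil C/b\rceil$, and the supremum over $c$ and $k$ attains this value, giving Equation~\eqref{eq:floor}.

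The main obstacle is not the counting but making the window argument rigorous. A gap that ends exactly at $t_0$ must be handled correctly, which is why the window is taken as the half-open interval $(t_0,t_0+g]$ and $\tau_c(k)$ is chosen as the last visit at or before $t_0$. Choosing $t_0$ after every class's first visit also avoids the undefined initial gap. The argument uses nothing about randomness, so it holds pathwise for every realisation of any policy, including RPR. When $b\ge C$ the bound reads $1$ and is trivial.
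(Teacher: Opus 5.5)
Your proof is correct and follows the paper's route: the paper also uses a pigeonhole argument, noting that a window of $\lceil C/b\rceil-1$ steps contains fewer than $C$ class slots, so some class must straddle the window with a gap of at least $\lceil C/b\rceil$. The only difference is presentation. You argue by contradiction and place the window explicitly after every class's first visit, which makes rigorous the paper's unstated assumption that the absent class has replay visits on both sides of the window.
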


\begin{proof}
A window of $w$ consecutive steps contains at most $bw$ replay class slots
and therefore at most $bw$ distinct classes. Take $w=\lceil C/b\rceil-1$;
then $bw<C$ whether or not $b$ divides $C$, so some class $c$ is absent from
the whole window. The adjacent replay visits to $c$ lie on opposite sides of
the window, giving $G_c\ge w+1$.
\end{proof}

Combining Proposition~\ref{prop:floor} with Proposition~\ref{prop:gap} gives
an upper bound of $2-1/\lceil C/b\rceil$ on RPR's worst-case ratio to the
lower bound. The geometric gap
distribution in Equation~\eqref{eq:geometric} has unbounded support. The next
proposition characterises schedules that attain the lower bound exactly when
$b\mid C$.

\begin{proposition}[Exact lower-bound attainment implies a fixed partition]
\label{prop:tradeoff}
On a fixed resident set of $C$ classes, suppose a bi-infinite schedule visits
exactly $b$ distinct classes per step, $b\mid C$, and attains
$G_c(k)=C/b$ for every successive pair of visits. Then there is a fixed
partition of the resident class set into $C/b$ blocks of size $b$ such that the replay batch's class set is always one block. Two
classes in the same block co-occur at every visit; two classes in different
blocks never co-occur.
\end{proposition}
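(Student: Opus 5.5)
Let $q=C/b$ and let $S_t$ denote the class set of the replay batch at step $t$, with $|S_t|=b$. The plan is to show that the schedule is $q$-periodic, i.e.\ $S_{t+q}=S_t$ for every $t$. Once that holds, the partition is $\{S_0,S_1,\dots,S_{q-1}\}$ and the co-occurrence claims follow immediately.

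\textbf{Step 1: each window of $q$ steps is a partition.} Fix any window $W=\{t,t+1,\dots,t+q-1\}$ of $q$ consecutive steps. Every class is visited infinitely often in both directions, since the schedule is bi-infinite and recurrent and every gap is finite. Because every gap equals exactly $q$, each class $c$ is visited exactly once in $W$. Indeed, two visits inside $W$ would be at most $q-1$ steps apart. Conversely, if $c$ had no visit in $W$, its last visit before $W$ and its first visit after $W$ would be at least $q+1$ apart. Hence $S_t,\dots,S_{t+q-1}$ are pairwise disjoint and their union is the whole resident set. This is consistent with the count $qb=C$, and in fact that count already gives disjointness once coverage is known. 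The counting argument is the same one used in Proposition~\ref{prop:floor}.

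\textbf{Step 2: periodicity.} Compare the windows $\{t,\dots,t+q-1\}$ and $\{t+1,\dots,t+q\}$. Both partition the class set and share the steps $t+1,\dots,t+q-1$. It follows that $S_{t+q}$ must equal the complement of $S_{t+1}\cup\dots\cup S_{t+q-1}$, which is $S_t$. A more direct route gives the same conclusion: every $c\in S_t$ has its next visit at exactly $t+q$, so $S_t\subseteq S_{t+q}$, and equality follows because both sets have size $b$. By induction in both directions, $S_s=S_{s \bmod q}$ for all $s$, using bi-infiniteness to extend backwards.

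\textbf{Step 3: co-occurrence.} Let the blocks be $P_j=S_j$ for $j=0,\dots,q-1$. By Step 1 these sets form a partition of the resident set into $q$ blocks of size $b$, and by Step 2 every batch is one of them.
\begin{itemize}
\item If $c,d\in P_j$, then every visit of $c$ occurs at a step $s\equiv j \pmod q$, and at such a step the batch is $P_j\ni d$. So $c$ and $d$ co-occur at every visit.
\item If $c$ and $d$ lie in different blocks, no batch contains both of them, so they never co-occur.
\end{itemize}

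The main subtlety is in Step 1. The argument relies on every class having a visit on both sides of any window, which is exactly what the bi-infinite, recurrent hypothesis provides. For one-sided schedules, an initial segment could behave differently, and that is why the hypothesis is stated as it is. Once Step 1 is in place, the rest is routine.
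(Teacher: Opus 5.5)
Your proof is correct and is essentially the paper's argument: the paper compresses your Steps~1--2 into a residue map $\phi$. A uniform gap $m=C/b$ forces each class $c$ to be visited exactly at the steps $t\equiv\phi(c)\pmod m$, so the batch at step $t$ is the level set $\{c:\phi(c)\equiv t\}$; your shift argument $S_t\subseteq S_{t+q}$ with $|S_t|=|S_{t+q}|=b$ is the same observation. Like the paper, you assume that every class recurs in both directions, but this assumption can be dropped: any two visits of a class differ by a positive multiple of $q$, so each class appears at most once in a window of $q$ steps, and your count $qb=C$ then forces exactly once, which also means your closing worry about one-sided schedules does not arise.
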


\begin{proof}
Write $m=C/b$. A uniform gap of $m$ means class $c$ is visited exactly at the
steps congruent to some fixed $\phi(c)$ modulo $m$. Hence the class set at
step $t$ is $\{c:\phi(c)\equiv t\}$, which depends on $t$ only through
$t \bmod m$ and partitions the resident class set into the $m$ level sets
of $\phi$,
each necessarily of size $b$ since every step visits $b$ classes.
\end{proof}

For a batch-coupled loss, the fixed partition restricts class $c$ to
co-occurrence with the $b-1$ other classes in its block.
Proposition~\ref{prop:tradeoff} applies only when $G_c(k)=C/b$ for every
class and every visit; it does not characterise schedules whose gaps are only
close to the lower bound. RPR samples a new class permutation each pass,
whereas deterministic class cycling induces the fixed partition when
$b\mid C$.

For Equation~\eqref{eq:paircoverage}, condition on the position of $c$ in a
uniformly permuted pass. Exactly $b-1$ of the other $C-1$ positions share its
replay batch, so $d$ shares the replay batch with $c$ with probability
$(b-1)/(C-1)$. That batch occupies one
of the $C/b$ steps in the pass, giving the additional factor $b/C$ in the
time-averaged pair frequency.

\begin{proposition}[Stream-length scaling of the maximum gap]
\label{prop:separation}
On a fixed resident set with $b<C$, over the first $n$ rehearsal gaps of each
class,
the i.i.d.\ balanced draw satisfies
\begin{equation}
  \mathbb{E}\Big[\max_{c,k\le n} G_c(k)\Big]
  \;=\;O\!\left(
  \frac{\log (nC)}{\log\!\big(1/(1-p)\big)}\right),
  \qquad p=\tfrac{b}{C},
  \label{eq:evt}
\end{equation}
and the maximum over the $n$ independent gaps of any fixed class is
$\Omega(\log n/\log(1/(1-p)))$. The i.i.d.\ maximum therefore has
logarithmic dependence on $n$, whereas $\pi_{\mathrm{rp}}$ satisfies
$\max_{c,k}G_c(k)\le 2\lceil C/b\rceil-1$ independently of $n$.
When $b\mid C$, as $m=C/b\to\infty$,
\begin{equation}
  \frac{\operatorname{Var}\big[G_c^{\mathrm{iid}}\big]}
       {\operatorname{Var}\big[G_c^{\mathrm{rp}}\big]}
  \;=\;\frac{(1-p)/p^{2}}{(m^{2}-1)/6}\;\longrightarrow\;6 ,
  \label{eq:varratioapp}
\end{equation}
so the limiting ratio of standard deviations is $\sqrt{6}\approx 2.449$.
\end{proposition}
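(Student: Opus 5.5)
The proof has three parts: an upper bound on the i.i.d.\ maximum, a lower bound for a single class, and the variance ratio, with the RPR half quoted directly from Proposition~\ref{prop:gap}.

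\textbf{Upper bound.} Under the i.i.d.\ balanced draw, each class is included at each step independently across steps with probability $p=b/C$. Hence every gap $G_c(k)$ is geometric as in Equation~\eqref{eq:geometric}, with $\Pr[G_c(k)>g]=(1-p)^g$. Gaps of different classes are dependent, but a union bound does not need independence. Over the $nC$ gaps $\{G_c(k)\}_{c,\,k\le n}$,
\[
\Pr\Big[\max_{c,k\le n}G_c(k)>g\Big]\le \min\{1,\,nC(1-p)^g\}.
\]
Write $\mathbb{E}[\max]=\sum_{g\ge0}\Pr[\max>g]$. Set $g_0=\lceil\log(nC)/\log(1/(1-p))\rceil$. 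Bound the terms with $g<g_0$ by $1$. For $g\ge g_0$ the terms form a geometric series with sum at most $1/p$. Since $\log(1/(1-p))\le p/(1-p)$, we have $1/p\le 1/\log(1/(1-p))+1$. For $nC\ge2$ this is absorbed into the $O(\log(nC)/\log(1/(1-p)))$ term, which gives Equation~\eqref{eq:evt}.

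\textbf{Lower bound for a fixed class.} Gaps of a single class are i.i.d.\ geometric, because inclusion events are independent across steps. Take $g_1=\lfloor \tfrac12\log n/\log(1/(1-p))\rfloor$, so that $(1-p)^{g_1}\ge n^{-1/2}$. Then
\[
\Pr[\max_{k\le n}G_c(k)\le g_1]=(1-(1-p)^{g_1})^n\le \exp(-\sqrt n)\to0,
\]
so $\mathbb{E}[\max]\ge g_1(1-e^{-\sqrt n})=\Omega(\log n/\log(1/(1-p)))$. The RPR statement is Proposition~\ref{prop:gap} verbatim, and its bound involves no $n$.

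\textbf{Variance ratio.} The i.i.d.\ variance is the geometric variance $(1-p)/p^2$. For RPR with $b\mid C$, passes align exactly with blocks of $m$ steps and there is no deferral. The position of $c$ within a pass is uniform on $\{1,\dots,m\}$, and positions in successive passes are independent because each pass is a fresh uniform permutation. The gap between visits in consecutive passes is therefore $G=m+U'-U$ with $U,U'$ i.i.d.\ uniform on $\{1,\dots,m\}$. This gives
\[
\operatorname{Var}[G]=2\cdot\frac{m^2-1}{12}=\frac{m^2-1}{6}.
\]
Substituting $p=1/m$, the ratio is $(1-1/m)m^2\cdot 6/(m^2-1)=6m/(m+1)\to6$.

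\textbf{Expected obstacle.} The main subtlety is interpreting the RPR variance as an event-average over a stationary sequence of visits. The gaps $G_c(k)$ are not independent: consecutive gaps share a $U$ term. Their marginal law is nonetheless the stated triangular distribution, so the variance is well defined and equal to $(m^2-1)/6$. For the i.i.d.\ maximum, the only delicate point is keeping the additive $1/p$ term inside the stated $O(\cdot)$, which holds whenever $nC\ge 2$.
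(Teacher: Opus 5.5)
Your proposal is correct and follows the paper's argument. The paper likewise uses a union bound over the $nC$ geometric tails without cross-class independence, the i.i.d.\ renewal gaps of one class for the lower bound, and the decomposition $G_c^{\mathrm{rp}}=m-i+j$ with independent uniform batch positions for the variance ratio. The only step to state explicitly is that the additive constants (from the ceiling in $g_0$ and the $+1$ in $1/p\le 1/\log(1/(1-p))+1$) are absorbed into the $O(\cdot)$ term. This holds because $b<C$ forces $1-p\ge 1/C$, hence $\log(1/(1-p))\le\log C\le\log(nC)$.
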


The upper bound in Equation~\eqref{eq:evt} follows by a union bound over
$nC$ geometric tails and does not assume independence across classes; the
lower bound uses the renewal gaps of one class.
Equation~\eqref{eq:varratioapp} follows from $G_c^{\mathrm{rp}}
= m-i+j$ with $i,j$ the batch positions of $c$ in two consecutive passes, which are
independent and uniform on $\{1,\dots,m\}$ when $b \mid C$.

For the four configurations, the geometric extreme-value scales are
$121$, $30$, $247$, and $66$, respectively; the measured maxima are
$117$, $37$, $263$, and $64$. The measured standard-deviation ratios are
$2.41$, $2.42$, $2.38$, and $2.43$.
The finite-$m$ variance expression is exact when $b$ divides $C$; elsewhere a
partial pass changes the finite-sample distribution, so $\sqrt6$ is an
asymptotic reference rather than an exact prediction.

Proposition~\ref{prop:separation} predicts logarithmic growth of the
i.i.d.\ maximum gap with the number of draws, while the RPR bound is
independent of stream length. The proposition concerns the scheduling
statistic and does not imply a monotone relationship with accuracy.

The following proposition considers a convex cost functional over rehearsal
gaps at fixed mean gap. A balanced long-run class marginal gives
$\mathbb{E}[G_c]=C/b$ once finite boundary effects vanish.

\begin{proposition}[Jensen bound for convex gap cost]
\label{prop:jensen}
Let $h:\mathbb{R}_{\ge0}\to\mathbb{R}_{\ge0}$ be non-decreasing and convex, modelling
the cost of a rehearsal gap of $g$ steps. Consider recurrent
retrieval policies for which the event-average gap exists and satisfies
$\mathbb{E}[G_c]=C/b$. Their expected cost obeys
\begin{equation}
  \mathbb{E}\big[h(G_c)\big]\ \ge\ h\big(C/b\big),
  \label{eq:jensen}
\end{equation}
If $h$ is strictly convex on the convex hull of the support of $G_c$,
equality holds if and only if $G_c$ is almost surely constant. Under mere
convexity, nonconstant gaps can also attain equality when $h$ is affine on
their support.
\end{proposition}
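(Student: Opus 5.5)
The bound is Jensen's inequality applied to the event-average gap distribution. I will treat $G_c$ as a random variable drawn from the event-average law, either the limiting empirical distribution of $G_c(k)$ over visits $k$ or the stationary renewal law where one exists. By hypothesis its mean exists and equals $C/b$. Because $h$ is convex on $\mathbb{R}_{\ge0}$, it has a supporting line at $g_0=C/b$. Concretely, I take $s=h'_+(g_0)$, which is finite since $g_0>0$ lies in the interior of the domain, and use
\[
h(g)\ \ge\ h(g_0)+s\,(g-g_0)\qquad\text{for all } g\ge0 .
\]
Taking expectations gives $\mathbb{E}[h(G_c)]\ge h(g_0)+s(\mathbb{E}[G_c]-g_0)=h(C/b)$, which is Equation~\eqref{eq:jensen}.

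One integrability point needs checking. $\mathbb{E}[h(G_c)]$ may be $+\infty$, for instance for the geometric law in Equation~\eqref{eq:geometric} with a fast-growing $h$. In that case the inequality holds trivially. Otherwise the affine lower bound is integrable because $\mathbb{E}[G_c]<\infty$, so the expectation step is legitimate. Monotonicity of $h$ is not needed for the inequality itself; it only supplies the modelling interpretation as a cost.

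For the equality case under strict convexity, let $I$ be the convex hull of the support of $G_c$, which contains $g_0$.
\begin{itemize}
\item If $G_c$ is almost surely constant, that constant must equal its mean $g_0$, so equality holds trivially.
\item Conversely, suppose $G_c$ is nonconstant. Then $g_0$ lies in the interior of $I$, and $G_c\ne g_0$ with positive probability.
\item Strict convexity on $I$ gives strict inequality in the supporting-line bound at every $g\in I\setminus\{g_0\}$.
\item Since $h(G_c)-h(g_0)-s(G_c-g_0)\ge0$ everywhere and is strictly positive on an event of positive probability, its expectation is strictly positive. Hence equality fails.
\end{itemize}

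The main subtlety is here: the strict inequality of the supporting line must hold on the support, not merely somewhere. This is why the hypothesis is stated on the convex hull of the support.

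The closing remark about mere convexity is a counterexample, not a claim to prove. If $h$ is affine on $I$, the supporting-line inequality is an equality throughout $I$, so any gap law supported in $I$ with mean $g_0$ attains equality. One concrete instance is RPR's two-point gap distribution together with $h(g)=g$.
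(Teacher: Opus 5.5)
Your argument is correct and is essentially the paper's own, which justifies the bound by a one-line appeal to Jensen's inequality; your supporting-line derivation, integrability remark, and strict-convexity equality analysis spell that appeal out. One slip in the closing example: RPR's gap law is not two-point (for $b\mid C$ it is $m-i+j$ with $m=C/b$ and $i,j$ independent uniform on $\{1,\dots,m\}$, i.e.\ triangular on $\{1,\dots,2m-1\}$), but the counterexample still stands because $h(g)=g$ gives equality for every law with mean $C/b$.
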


Equation~\eqref{eq:jensen} is Jensen's inequality and is conditional on the
choice of $h$. It does not assume that accuracy is convex in the rehearsal
gap. Policies with the same marginal may also differ in temporal dependence
and class co-occurrence; the proposition isolates gap dispersion within the
specified cost model.

\section{Robustness to the evaluation-time weight average}
\label{sec:emarobust}

We compare separately trained arms with and without the averaged readout of
Section~\ref{sec:protocol} to assess its interaction with retrieval. At buffer
$500$, the $320$-run block crosses four dataset--stream cells with live and
EMA readouts under reservoir storage and replay batch $32$. Removing the exponential moving
average (EMA) increases the relative effect of the balanced draw from
$+6.5$--$+11.5\%$ to $+12.8$--$+17.9\%$. These comparisons show that the
relative balanced-draw effect depends on the readout in these settings.

At the primary operating point, a separate $120$-run block evaluates
reservoir storage, buffer $5{,}120$, and $b=8$ without the EMA, paired to the
seeds and cells of
Table~\ref{tab:seams}. Removing the readout reduces accuracy by $1.6$ to
$5.2$ points in eleven of the twelve arms. The balanced-draw effects over
uniform retrieval are $-0.22$, $+1.08$, $+0.36$, and $+0.65$ points for
CIFAR-100 balanced, CIFAR-100 LT10, CIFAR-100 LT100, and Tiny-ImageNet LT10,
respectively; only the LT10 CIFAR-100 effect is material. In contrast,
RPR remains material in all four cells, with gains of $+2.41$, $+2.22$,
$+1.39$, and $+1.92$ points ($+16.7\%$ to $+35.7\%$ relative) over uniform
retrieval. Its margins over the balanced draw are $+2.63$, $+1.14$,
$+1.03$, and $+1.28$ points, with at least eight of ten paired differences
positive in each cell. These runs have positive mean RPR effects without
the averaged readout; the balanced-draw effects are material in only one cell.

\section{Accuracies for crossed storage and retrieval policies}
\label{app:accuracies}

Table~\ref{tab:seams} provides the absolute accuracies underlying the
storage--retrieval contrasts in Table~\ref{tab:contrasts}. Every arm uses
ER-ACE with a ResNet-18 initialised from scratch, buffer $5{,}120$, replay
batch $b=8$, and one stream pass. Entries report mean test accuracy and
sample standard deviation over ten paired seeds; bold identifies the
highest mean within each stream. LT10 and LT100 denote exponential
long-tailed streams. Reservoir sampling ($\reservoir$) and Balanced
Reservoir Sampling (BRS), which evicts from the largest class, form the
two storage policies. Comparing columns
within a row holds storage fixed and changes retrieval; comparing the two
rows within a stream holds retrieval fixed and changes storage. This
factorial comparison separates which examples remain available from when
their classes return to the learner. The reported standard deviations
describe individual-arm variability; paired inference appears in
Appendix~\ref{app:statistics}.

\begin{table}[htbp]
\caption{Final test accuracy (\%) for crossed storage and retrieval policies.}
\label{tab:seams}
\centering
\scriptsize
\begin{tabular}{l l cccc}
\toprule
& & \multicolumn{4}{c}{retrieval $\pi_{\mathrm{draw}}$} \\
\cmidrule(l){3-6}
stream & storage $\pi_{\mathrm{store}}$ & \textsc{uni} & \textsc{iid} & \textsc{fix} & \textsc{rpr} \\
\midrule
\multirow{2}{*}{CIFAR-100, balanced}
 & $\reservoir$ & $18.75{\pm}0.40$ & $19.42{\pm}0.58$ & $18.55{\pm}0.44$ & $\mathbf{20.61{\pm}0.67}$ \\
 & \textsc{brs} & $18.90{\pm}0.63$ & $19.33{\pm}0.58$ & $18.64{\pm}0.59$ & $20.55{\pm}0.79$ \\
\midrule
\multirow{2}{*}{CIFAR-100, lt$10$}
 & $\reservoir$ & $\phantom{0}9.44{\pm}0.46$ & $11.23{\pm}0.46$ & $12.05{\pm}0.42$ & $12.63{\pm}0.41$ \\
 & \textsc{brs} & $11.30{\pm}0.65$ & $11.43{\pm}0.52$ & $11.78{\pm}0.59$ & $\mathbf{12.81{\pm}0.52}$ \\
\midrule
\multirow{2}{*}{CIFAR-100, lt$100$}
 & $\reservoir$ & $\phantom{0}6.69{\pm}0.73$ & $\phantom{0}8.00{\pm}0.64$ & $\phantom{0}8.26{\pm}0.72$ & $\phantom{0}8.72{\pm}0.34$ \\
 & \textsc{brs} & $\phantom{0}7.57{\pm}1.11$ & $\phantom{0}8.36{\pm}0.36$ & $\phantom{0}8.80{\pm}0.55$ & $\mathbf{\phantom{0}8.95{\pm}0.47}$ \\
\midrule
\multirow{2}{*}{Tiny-ImageNet, lt$10$}
 & $\reservoir$ & $\phantom{0}7.09{\pm}0.55$ & $\phantom{0}7.76{\pm}0.30$ & $\phantom{0}8.45{\pm}0.54$ & $\phantom{0}9.43{\pm}0.59$ \\
 & \textsc{brs} & $\phantom{0}8.08{\pm}0.60$ & $\phantom{0}7.92{\pm}0.45$ & $\phantom{0}8.56{\pm}0.45$ & $\mathbf{\phantom{0}9.69{\pm}0.50}$ \\
\bottomrule
\end{tabular}
\end{table}

RPR has the highest mean accuracy within every storage--stream row, and
BRS with RPR gives the best mean in all three long-tailed streams. On
balanced CIFAR-100, reservoir storage with RPR is slightly higher than
BRS with RPR ($20.61$ versus $20.55$), so storage balancing is not uniformly
beneficial. RPR also has higher mean accuracy than independent balanced
retrieval within every BRS row.

\section{Paired statistical tests}
\label{app:statistics}

Table~\ref{tab:statistics} quantifies the uncertainty in the RPR contrasts
of Table~\ref{tab:contrasts}. Pairing compares retrieval policies under the
same seed, so $s_\Delta$ describes variation in the within-seed difference,
not the accuracy variation of either arm separately. The test statistic
has nine degrees of freedom, $p$ is its two-sided Student-$t$ probability,
and $k$ counts positive paired differences out of ten. Each Holm family
contains the four streams for one storage policy and one comparator;
the adjustment does not pool all sixteen tests into a single family.

\begin{table}[htbp]
\caption{Paired tests of RPR against independent balanced retrieval and fixed cycling.}
\label{tab:statistics}
\centering
\scriptsize
\resizebox{\textwidth}{!}{%
\begin{tabular}{lllrrrrr}
\toprule
storage & comparator & stream & $\Delta\pm s_\Delta$ & $t(9)$ & $p$ & Holm $p$ & $k$ \\
\midrule
\multirow{8}{*}{$\reservoir$}
 & \multirow{4}{*}{i.i.d.} & CIFAR-100 bal.   & $1.191\pm0.616$ & 6.114 & $1.76{\times}10^{-4}$ & $3.52{\times}10^{-4}$ & 10 \\
 & & CIFAR-100 lt$10$  & $1.400\pm0.476$ & 9.294 & $6.56{\times}10^{-6}$ & $1.97{\times}10^{-5}$ & 10 \\
 & & CIFAR-100 lt$100$ & $0.723\pm0.605$ & 3.780 & $4.35{\times}10^{-3}$ & $4.35{\times}10^{-3}$ & 9 \\
 & & Tiny-IN lt$10$    & $1.671\pm0.541$ & 9.763 & $4.37{\times}10^{-6}$ & $1.75{\times}10^{-5}$ & 10 \\
\cmidrule(l){2-8}
 & \multirow{4}{*}{fixed cycle} & CIFAR-100 bal.   & $2.058\pm0.736$ & 8.847 & $9.83{\times}10^{-6}$ & $3.93{\times}10^{-5}$ & 10 \\
 & & CIFAR-100 lt$10$  & $0.588\pm0.645$ & 2.883 & $1.81{\times}10^{-2}$ & $3.62{\times}10^{-2}$ & 8 \\
 & & CIFAR-100 lt$100$ & $0.466\pm0.709$ & 2.080 & $6.73{\times}10^{-2}$ & $6.73{\times}10^{-2}$ & 8 \\
 & & Tiny-IN lt$10$    & $0.973\pm0.720$ & 4.273 & $2.07{\times}10^{-3}$ & $6.21{\times}10^{-3}$ & 9 \\
\midrule
\multirow{8}{*}{\textsc{brs}}
 & \multirow{4}{*}{i.i.d.} & CIFAR-100 bal.   & $1.221\pm1.110$ & 3.477 & $6.97{\times}10^{-3}$ & $6.97{\times}10^{-3}$ & 10 \\
 & & CIFAR-100 lt$10$  & $1.383\pm0.505$ & 8.657 & $1.17{\times}10^{-5}$ & $3.52{\times}10^{-5}$ & 10 \\
 & & CIFAR-100 lt$100$ & $0.583\pm0.463$ & 3.979 & $3.21{\times}10^{-3}$ & $6.42{\times}10^{-3}$ & 9 \\
 & & Tiny-IN lt$10$    & $1.773\pm0.560$ & 10.005 & $3.56{\times}10^{-6}$ & $1.43{\times}10^{-5}$ & 10 \\
\cmidrule(l){2-8}
 & \multirow{4}{*}{fixed cycle} & CIFAR-100 bal.   & $1.911\pm0.608$ & 9.937 & $3.77{\times}10^{-6}$ & $1.51{\times}10^{-5}$ & 10 \\
 & & CIFAR-100 lt$10$  & $1.034\pm0.664$ & 4.924 & $8.20{\times}10^{-4}$ & $1.64{\times}10^{-3}$ & 9 \\
 & & CIFAR-100 lt$100$ & $0.143\pm0.394$ & 1.147 & $2.81{\times}10^{-1}$ & $2.81{\times}10^{-1}$ & 6 \\
 & & Tiny-IN lt$10$    & $1.130\pm0.404$ & 8.844 & $9.85{\times}10^{-6}$ & $2.95{\times}10^{-5}$ & 10 \\
\bottomrule
\end{tabular}%
}
\end{table}

Every RPR--i.i.d.\ comparison remains significant at the $5\%$ level after
this adjustment, under both reservoir and BRS storage. RPR also exceeds
fixed cycling in six of the eight adjusted tests. Both exceptions concern
CIFAR-100 LT100, where the positive mean differences do not establish an
additional benefit over fixed cycling. Across these comparisons, evidence for
RPR over independent draws is more consistent than evidence for RPR over fixed cycling.

\section{Measured gaps inside the online runs}
\label{app:dyngaps}

Table~\ref{tab:dyngaps} extends the static diagnostic to an evolving replay
buffer. Its class count $C$ denotes the benchmark vocabulary, not an input
available to RPR in advance; the scheduler uses only the current resident
set. All runs use ER-ACE, buffer $5{,}120$, replay batch $b=8$, and ten
seeds per cell; entries average the per-run statistics across seeds.
Because this set grows and can lose classes, the aggregate mean gap
need not equal the stationary value $C/b$. The resident maximum tests the
churn-conditional guarantee, while the wall maximum also counts time when
storage makes a class unavailable to every retrieval rule. The violation
count records steps whose resident gap exceeds Equation~\eqref{eq:churnbound}
using the run's own $C_{\max}$ and insertion count. Class deactivations
record departures from memory.

\begin{table}[htbp]
\caption{Online rehearsal-gap statistics (steps), bound exceedances, and class deactivations.}
\label{tab:dyngaps}
\centering
\small
\setlength{\tabcolsep}{5.5pt}
\begin{tabular}{l r l rrr r r rr}
\toprule
stream & $C$ & policy & mean & p95 & p99 & res.\ max & wall max & viol. & deact. \\
\midrule
\multirow{3}{*}{CIFAR-100 LT10} & \multirow{3}{*}{100}
  & \textsc{cb}  & 6.63 & 22.0 & 37.1 & 81.3 & 81.3 & $427.4$ & 0.0 \\
& & \textsc{fix} & 6.79 & 12.0 & 13.0 & 13.0 & 13.0 & $\mathbf{0}$ & 0.0 \\
& & \textsc{rpr} & 6.81 & \textbf{15.6} & \textbf{19.6} & \textbf{23.9} & \textbf{23.9} & $\mathbf{0}$ & 0.0 \\
\midrule
\multirow{3}{*}{Tiny-ImageNet LT10} & \multirow{3}{*}{200}
  & \textsc{cb}  & 13.18 & 44.7 & 75.7 & 175.9 & 175.9 & $1032.8$ & 0.8 \\
& & \textsc{fix} & 13.57 & 25.0 & 25.0 & 25.0 & 25.0 & $\mathbf{0}$ & 0.8 \\
& & \textsc{rpr} & 13.58 & \textbf{31.5} & \textbf{39.3} & \textbf{48.1} & \textbf{48.1} & $\mathbf{0}$ & 0.8 \\
\midrule
\multirow{3}{*}{Tiny-ImageNet LT100} & \multirow{3}{*}{200}
  & \textsc{cb}  & 12.58 & 42.5 & 69.7 & 152.6 & 152.6 & $521.9$ & 7.4 \\
& & \textsc{fix} & 13.14 & 23.7 & 24.1 & 24.6 & 29.8 & $\mathbf{0}$ & 7.4 \\
& & \textsc{rpr} & 13.19 & \textbf{30.3} & \textbf{37.9} & \textbf{46.3} & 47.8 & $\mathbf{0}$ & 7.4 \\
\bottomrule
\end{tabular}
\end{table}

RPR yields no observed violation of the churn-conditional bound in any of
the three settings, while the independent draw exceeds the same threshold
hundreds of times per run. Fixed cycling has shorter tails than RPR in these
settings. On Tiny-ImageNet LT100, the RPR wall maximum of $47.8$ exceeds its
resident maximum of $46.3$ steps, alongside $7.4$ class deactivations per
run. This distinction makes the guarantee explicitly conditional on storage
retaining the class.

\section{Head, mid and tail accuracy and per-class forgetting}
\label{app:groups}

Table~\ref{tab:groups} examines whether the aggregate scheduling gain
reflects a redistribution of accuracy towards frequent classes. Within each
run, classes are sorted by observed online frequency, with class identifier
breaking ties. The first and second $\lfloor C/3\rfloor$ classes form the
tail and mid groups, and the remainder forms the head group; the resulting
sizes are $33/33/34$ for CIFAR-100 and $66/66/68$ for ImageNet-R and
Tiny-ImageNet. Macro accuracy weights all classes equally. Each entry compares RPR with
i.i.d.\ class-balanced retrieval on the test split, with $95\%$ Student-$t$
intervals over ten paired seeds. The from-scratch ResNet-18 settings use
buffer $5{,}120$ and $b=8$; pretrained ViT-B/16 uses buffer $500$ and $b=32$.
For class $c$, forgetting is its highest accuracy over task-boundary
evaluations minus its final accuracy; the table averages this quantity over
classes. Positive accuracy differences favour RPR; negative forgetting
differences indicate improved retention.

\begin{table}[htbp]
\caption{RPR minus independent balanced retrieval in frequency-group accuracy and forgetting.
Differences are in percentage points; macro accuracy weights classes equally
and therefore differs from task-mean accuracy when class sizes are unequal.}
\label{tab:groups}
\centering
\small
\begin{tabular}{l rrrr r}
\toprule
cell & head & mid & tail & macro & forgetting \\
\midrule
\multicolumn{6}{l}{\emph{from-scratch ResNet-18, buffer $5{,}120$, $b=8$}} \\
CIFAR-100 LT10
  & $+1.20$ & $+1.55$ & $+1.45$ & $\mathbf{+1.40}$ & $\mathbf{-2.70}$ \\
  & \tiny[0.76, 1.65] & \tiny[0.52, 2.58] & \tiny[0.34, 2.56] & \tiny[1.06, 1.74] & \tiny[-3.12, -2.29] \\
Tiny-ImageNet LT10
  & $+1.64$ & $+1.64$ & $+1.73$ & $\mathbf{+1.67}$ & $\mathbf{-2.75}$ \\
  & \tiny[0.69, 2.60] & \tiny[0.81, 2.47] & \tiny[0.61, 2.85] & \tiny[1.28, 2.06] & \tiny[-3.38, -2.11] \\
Tiny-ImageNet LT100
  & $+0.69$ & $+0.82$ & $+1.00$ & $\mathbf{+0.84}$ & $\mathbf{-1.74}$ \\
  & \tiny[-0.02, 1.41] & \tiny[0.33, 1.30] & \tiny[0.50, 1.51] & \tiny[0.55, 1.13] & \tiny[-2.42, -1.05] \\
\midrule
\multicolumn{6}{l}{\emph{pretrained ViT-B/16, buffer $500$, $b=32$}} \\
CIFAR-100 (224px)
  & $-0.02$ & $-1.03$ & $\mathbf{-2.25}$ & $\mathbf{-1.09}$ & $\mathbf{+0.91}$ \\
  & \tiny[-1.51, 1.47] & \tiny[-2.99, 0.92] & \tiny[-3.79, -0.71] & \tiny[-2.03, -0.15] & \tiny[0.08, 1.74] \\
ImageNet-R
  & $+0.14$ & $+0.26$ & $-0.07$ & $+0.11$ & $+0.13$ \\
  & \tiny[-0.98, 1.25] & \tiny[-1.18, 1.70] & \tiny[-1.55, 1.41] & \tiny[-0.84, 1.06] & \tiny[-0.83, 1.08] \\
\bottomrule
\end{tabular}
\end{table}

In the from-scratch settings, all three frequency groups have positive
mean differences, and tail gains are at least as large as head gains.
The intervals exclude zero for every group except the head of
Tiny-ImageNet LT100; macro forgetting decreases in all three settings.
The pretrained $b=32$ controls show a different pattern: CIFAR-100 loses
$2.25$ points on tail classes and has greater forgetting, while every
ImageNet-R interval includes zero. These results support a setting-dependent
benefit rather than a universal improvement in rare-class retention,
consistent with the gap-opportunity analysis in
Table~\ref{tab:gap-opportunity}.

\section{Gap drift, co-occurrence, and buffer diagnostics}
\label{app:review-followup}
\subsection{Bias displacement during joint absence}

Table~\ref{tab:gap-drift} measures the realised classifier-bias change over
episodes in which a resident class is absent from both incoming and replay
batches under masked ER-ACE with a linear classifier. All three diagnostics
in this appendix use the same linear classifier with bias, whereas the primary
storage--retrieval comparison uses the bias-free cosine head. This
distinction is necessary for interpreting the measured bias displacement.
An episode starts at a replay visit and closes at the next replay visit.
An incoming-only return neither closes nor censors it, but that update is not
included in the accumulated displacement because the class is not jointly
absent. Eviction discards the open episode. Columns group completed episodes
by replay-to-replay gap; the displacement sums only jointly absent updates.
Within each seed we average completed episodes in a bin, then average those
per-seed means equally across the ten seeds.
This diagnostic measures bias displacement along the training trajectory
to evaluate the bias-pressure mechanism.

\begin{table}[htbp]
\centering
\caption{Classifier-bias displacement by replay-to-replay gap (steps),
accumulated only over jointly absent updates.}
\label{tab:gap-drift}
\small
\begin{tabular}{llrrrrr}
\toprule
Dataset & Draw & $1$--$4$ & $5$--$8$ & $9$--$16$ & $17$--$32$ & $\geq33$ \\
\midrule
\multirow{2}{*}{CIFAR-100}
 & i.i.d. & $-.00191$ & $-.00340$ & $-.00480$ & $-.00663$ & $-.00926$ \\
 & RPR    & $-.00211$ & $-.00311$ & $-.00418$ & $-.00565$ & --- \\
\midrule
\multirow{2}{*}{Tiny-ImageNet}
 & i.i.d. & $-.00132$ & $-.00230$ & $-.00340$ & $-.00486$ & $-.00728$ \\
 & RPR    & $-.00160$ & $-.00236$ & $-.00311$ & $-.00420$ & $-.00573$ \\
\bottomrule
\end{tabular}
\end{table}

All populated bins have negative-displacement fraction $1.00$, and the
magnitude of the mean displacement increases with episode length under
both retrieval policies. RPR has no completed CIFAR-100 episode in the
$\geq33$-step bin; the dash denotes absence of observations, not zero drift.
These measurements support accumulated bias pressure during long gaps.
However, the bins condition on different episodes and classes, so a
within-bin difference between policies is not an isolated causal estimate
of scheduling, nor does the bias displacement determine the movement of
every logit through the shared representation.

\subsection{Renewal of class co-occurrence}

Table~\ref{tab:renewal-diagnostic} varies how often the class order renews
at buffer size $5{,}120$. The parameter $K$ counts traversals that reuse
one shuffled pass: RPR sets $K=1$, the intermediate schedules set $K=4$
or $16$, and fixed cycling never renews the order. Pair coverage and pair
entropy describe the diversity of within-batch class co-occurrence, while
the gap columns record the remaining temporal variation. Accuracy
differences use independent balanced retrieval as the comparator and
report $95\%$ paired intervals. Both pair statistics cover the full training
stream. Pair coverage is the number of distinct observed replay-class pairs
divided by $\binom{C_{\max}}{2}$, where $C_{\max}$ is the largest resident
class count in the run. Normalised pair entropy is
$-\sum_p q_p\log q_p/\log\binom{C_{\max}}{2}$, where $q_p$ is a pair's share
of all within-batch pair observations and the logarithm is natural.

\begin{table}[htbp]
\centering
\caption{Co-occurrence diversity, rehearsal gaps, and accuracy under different
renewal frequencies. Accuracy differences are in percentage points.}
\label{tab:renewal-diagnostic}
\small
\setlength{\tabcolsep}{4pt}
\begin{tabular}{llrrrr}
\toprule
Dataset & Schedule & Pair coverage & Pair entropy & Gap p99 / max & $\Delta$ accuracy \\
\midrule
\multirow{4}{*}{CIFAR-100}
 & RPR ($K=1$)    & $.739$ & $.892$ & $19.5/24.1$ & $+1.23\ [+.53,+1.92]$ \\
 & Epochal $K=4$  & $.601$ & $.875$ & $17.0/23.2$ & $+1.15\ [+.61,+1.70]$ \\
 & Epochal $K=16$ & $.515$ & $.855$ & $16.1/23.4$ & $+1.04\ [+.29,+1.78]$ \\
 & Fixed          & $.168$ & $.725$ & $13.0/13.0$ & $+0.47\ [-.27,+1.21]$ \\
\midrule
\multirow{4}{*}{Tiny-ImageNet}
 & RPR ($K=1$)    & $.568$ & $.893$ & $39.6/48.5$ & $+1.16\ [+.73,+1.59]$ \\
 & Epochal $K=4$  & $.425$ & $.867$ & $36.6/48.1$ & $+1.20\ [+.78,+1.62]$ \\
 & Epochal $K=16$ & $.364$ & $.841$ & $35.8/47.5$ & $+1.23\ [+.81,+1.65]$ \\
 & Fixed          & $.074$ & $.687$ & $25.0/25.0$ & $+0.84\ [+.52,+1.15]$ \\
\bottomrule
\end{tabular}
\end{table}

Less frequent renewal reduces pair coverage and entropy on both datasets,
but accuracy does not follow a universal monotone ordering. On CIFAR-100,
the mean gain decreases from $1.23$ points for RPR to $0.47$ for fixed
cycling; on Tiny-ImageNet, the intermediate schedules have slightly larger
point estimates than RPR. All bounded schedules have positive mean gains
over independent draws. Since renewal also changes the gap distribution,
these interventions do not independently identify the contribution of pair
diversity. The comparisons associate bounded gaps with positive mean accuracy
differences but do not isolate which temporal property accounts for them.

\subsection{Buffer capacity and class residency}

Table~\ref{tab:buffer-diagnostic} tests how memory capacity mediates the
scheduling effect on CIFAR-100 LT10. The accuracy columns report final
test accuracy, and $\Delta$ gives RPR minus independent balanced retrieval
with a $95\%$ paired interval. Deactivations count departure events from the
resident set, so a class can contribute more than once if it leaves, returns,
and leaves again. Reporting this count alongside the gap tail
distinguishes failure to retain a class from failure to revisit one that
remains available.

\begin{table}[htbp]
\centering
\caption{Accuracy (\%), class deactivations, and rehearsal gaps (steps)
across buffer capacities on CIFAR-100 LT10.}
\label{tab:buffer-diagnostic}
\small
\setlength{\tabcolsep}{3.5pt}
\resizebox{\linewidth}{!}{%
\begin{tabular}{rrrrrrr}
\toprule
Buffer & i.i.d. & RPR & $\Delta$ & Deactivations & i.i.d.\ p99/max & RPR p99/max \\
\midrule
$200$     & $8.40$  & $8.71$  & $+.30\ [+.01,+.59]$ & $16.1$ & $29.5/62.1$ & $14.7/18.1$ \\
$500$     & $9.89$  & $10.06$ & $+.17\ [-.32,+.65]$ & $5.9$ & $34.2/77.4$ & $17.9/21.8$ \\
$1{,}000$ & $10.77$ & $11.45$ & $+.68\ [+.13,+1.24]$ & $1.9$ & $36.6/78.0$ & $19.0/23.9$ \\
$2{,}000$ & $10.85$ & $12.01$ & $+1.16\ [+.64,+1.68]$ & $.3$ & $36.0/78.1$ & $19.4/24.3$ \\
$5{,}120$ & $10.97$ & $12.20$ & $+1.23\ [+.53,+1.92]$ & $0$ & $36.7/75.9$ & $19.5/24.1$ \\
\bottomrule
\end{tabular}}
\end{table}

RPR shortens the gap tail at every capacity, but the accuracy effect is
small at buffers $200$ and $500$, and the interval at $500$ includes zero.
The gains increase to $0.68$, $1.16$, and $1.23$ points at the three larger
capacities, where class deactivations fall from $1.9$ to zero. The effect
is not strictly monotone across the entire sweep. Residency and gap statistics
vary together in these comparisons, so their contributions are not isolated.
Retrieval cannot revisit a class while it is absent from memory.

\section{Loss interventions and cross-host comparisons}
\label{app:law}

Table~\ref{tab:law} examines how the learner's loss changes its response
to class-balanced retrieval. These contrasts compare the independent
balanced draw with uniform retrieval, not RPR with the balanced draw.
Mask removal tests dependence on ER-ACE's incoming-loss asymmetry;
mask addition tests whether the same intervention transfers to DER++.
The remaining rows probe the limits of a prediction based only on the
per-example form of the replay loss. This confirmation block contains
$470$ test runs. Entries report mean paired effects with $95\%$ Student-$t$
intervals; an effect is material when the interval excludes zero and its
relative magnitude exceeds the pre-specified $2\%$ threshold. CE denotes
per-example cross-entropy, whose normalizer ranges over output classes,
not other replay examples. Daggers identify null predictions specified
before execution.

\begin{table}[htbp]
\caption{Class-balanced minus uniform retrieval under loss interventions and across hosts.
Absolute effects are in percentage points.}
\label{tab:law}
\centering
\small
\setlength{\tabcolsep}{4.5pt}
\begin{tabular}{l l l l l c}
\toprule
test & host & replay term & \multicolumn{2}{l}{effect of \textsc{cb} draw, $95\%$ CI} & mat.\ \\
\midrule
\multirow{4}{*}{mask removal}
 & ER-ACE, mask on  & CE & $+2.26$ ($+9.9\%$) & $[+1.89,+2.63]$ & yes \\
 & ER-ACE, mask off & CE & $+0.00$ ($+0.0\%$) & $[-0.65,+0.65]$ & no \\
 & \emph{same, Tiny-IN}, on  & CE & $+2.03$ ($+13.9\%$) & $[+1.58,+2.49]$ & yes \\
 & \emph{same, Tiny-IN}, off & CE & $-0.10$ ($-0.9\%$) & $[-0.50,+0.31]$ & no \\
\midrule
\multirow{4}{*}{mask addition}
 & DER++, published    & per-example  & $-0.23$ ($-1.3\%$) & $[-0.64,+0.18]$ & no \\
 & DER++ $+$ ACE mask  & per-example & $+0.80$ ($+3.1\%$) & $[-0.04,+1.63]$ & no \\
 & \emph{same, Tiny-IN}, published & per-example & $-0.04$ ($-0.4\%$) & $[-0.33,+0.25]$ & no \\
 & \emph{same, Tiny-IN}, $+$ mask  & per-example & $+0.24$ ($+1.4\%$) & $[-0.21,+0.69]$ & no \\
\midrule
\multirow{4}{*}{prediction}
 & DER & logit regr. & $-0.02$ ($-0.2\%$) & $[-0.12,+0.08]$ & no$^\dagger$ \\
 & \emph{same, Tiny-IN}        & logit regr. & $-0.06$ ($-1.0\%$) & $[-0.14,+0.01]$ & no$^\dagger$ \\
 & DER++/Refresh               & unmasked CE & $+0.90$ ($+5.1\%$) & $[+0.38,+1.41]$ & yes$^\dagger$ \\
 & \emph{same, Tiny-IN}        & unmasked CE & $-0.24$ ($-2.2\%$) & $[-0.55,+0.07]$ & no$^\dagger$ \\
\bottomrule
\end{tabular}
\end{table}

Without the ER-ACE mask, the balanced-draw point estimates are near zero
and both intervals include zero. The mask itself improves ER-ACE by $5.39$ points
on CIFAR-100 and $4.16$ on Tiny-ImageNet, a separate contrast from the
retrieval effect in the table. Adding the mask to DER++ moves both point estimates
in the positive direction, but neither interval excludes zero. DER is
inconclusive, whereas DER++/Refresh on CIFAR-100 contradicts the predicted
null, with a $+0.90$-point effect. Together, these
controls support mask dependence in ER-ACE but not a necessary or
sufficient rule across hosts. They complement the RPR mask interaction in
Table~\ref{tab:mask-interaction}, which compares RPR with the independent
balanced draw at fixed storage and replay budget.

The scheduling and cross-host comparisons further qualify this account.
With an ACE mask, DER++ gains $1.02$ points on CIFAR-100 and $0.87$ on
Tiny-ImageNet from RPR over independent balanced retrieval; the corresponding
unmasked effects are $0.10$ and $0.24$ points. OCM has an inconclusive
$0.30$-point effect ($95\%$ CI $[-0.44,+1.05]$), rather than the negative
effect suggested by a simple batch-coupling argument. Here DER++/Refresh
denotes the unlearn-then-relearn perturbation with Fisher damping. Its
task-free adaptation updates Fisher estimates through an online moving
average triggered by observed-example count rather than task boundaries.
These results separate a local loss mechanism from the cumulative response
of an entire training trajectory.

\section{Accuracy across the learning trajectory}
\label{sec:trajectory}

To summarise accuracy during training, Table~\ref{tab:trajectory-auc} compares the arithmetic mean of
the ten task-boundary average-accuracy measurements within each pretrained
ViT-B/16 LT10 run. Entries give RPR minus the indicated comparator, with
$95\%$ Student-$t$ intervals over ten paired seeds. This metric summarises
the evaluated trajectory; it is not sample-wise online or prequential
accuracy and does not resolve changes between evaluation points.

\begin{table}[htbp]
\centering
\caption{RPR effects on mean task-boundary accuracy for pretrained ViT-B/16 on LT10 streams.
Differences are in percentage points.}
\label{tab:trajectory-auc}
\small
\setlength{\tabcolsep}{6pt}
\begin{tabular}{lcrr}
\toprule
Dataset & replay batch & RPR vs. i.i.d. & RPR vs. fixed cycle \\
\midrule
CIFAR-100 & $8$  & $+1.30\;[+0.59,+2.00]$ & $+0.05\;[-0.64,+0.74]$ \\
ImageNet-R & $8$ & $+1.13\;[+0.71,+1.54]$ & $+0.01\;[-0.67,+0.70]$ \\
CIFAR-100 & $32$ & $-0.14\;[-0.67,+0.38]$ & $+0.02\;[-0.35,+0.40]$ \\
ImageNet-R & $32$ & $+0.20\;[-0.27,+0.67]$ & $+0.39\;[-0.30,+1.09]$ \\
\bottomrule
\end{tabular}
\end{table}

At $b=8$, RPR improves this trajectory summary over independent balanced
retrieval by $1.30$ points on CIFAR-100 and $1.13$ points on ImageNet-R,
with both intervals above zero. At $b=32$, both intervals include zero,
consistent with the shorter resident-gap tail and weaker final-accuracy
effect in the matched replay-budget controls. Every fixed-cycle contrast
also includes zero. These measurements support higher mean task-boundary
accuracy for RPR than for independent draws in the tested small-batch setting,
but do not establish an additional effect over fixed cycling.

\section{Additional retrieval controls}
\label{app:additional}
\label{app:dissociation}

Table~\ref{tab:dissociation} compares four retrieval objectives in ER-ACE
with a $5{,}120$-example buffer and a replay batch of $32$. Entries report
test-accuracy differences from uniform retrieval in percentage points,
with relative changes in parentheses. MIR prioritises examples by
interference; the independent balanced draw samples classes uniformly; RPR
uses shuffled class passes. This comparison evaluates these retrieval
policies within the same ER-ACE configuration.

\begin{table}[htbp]
\caption{Test-accuracy effects of retrieval policies relative to uniform retrieval.
Absolute effects are in percentage points.}
\label{tab:dissociation}
\centering
\small
\begin{tabular}{l l rr}
\toprule
policy & retrieval rule & CIFAR-100 & Tiny-ImageNet \\
\midrule
MIR                     & interference-based selection  & $-16.38$ ($-50.1\%$) & $-8.18$ ($-35.1\%$) \\
uniform                 & uniform example sampling      & $\pm0$ & $\pm0$ \\
class-balanced          & independent balanced draw     & $+1.88$ ($+5.8\%$) & $+2.30$ ($+9.9\%$) \\
RPR                     & shuffled class passes         & $+2.43$ ($+7.4\%$) & $+2.91$ ($+12.5\%$) \\
\bottomrule
\end{tabular}
\end{table}

In these runs, MIR retrieves $7.4$ distinct classes per batch on average,
compared with uniform retrieval's $21.7$, and its accuracy is lower on both
datasets. Class-balanced retrieval and RPR instead improve over uniform
retrieval, with RPR giving the largest point estimate. On CIFAR-100, for
example, the relative changes are $-50.1\%$, $0\%$, $+5.8\%$,
and $+7.4\%$ for MIR, uniform retrieval, independent balanced retrieval,
and RPR, respectively. The association between class coverage and accuracy is
consistent with the ER-ACE mechanism, but this comparison changes the
retrieval objective as well as batch composition. It does not isolate a
causal effect of class count or establish a general ranking of MIR and RPR
across hosts. The separate readout control in
Appendix~\ref{sec:emarobust} tests whether the scheduling benefit depends on
evaluation-time weight averaging.

\end{document}